\newcommandx{\leftmarginnote}[2][2=0pt]
           {\checkoddpage
            \ifoddpage
              {\reversemarginpar\marginnote{#1}[#2]}
            \else
              {\marginnote{#1}[#2]}
            \fi}
\newcommandx{\rightmarginnote}[2][2=0pt]
           {\checkoddpage
            \ifoddpage
              {\marginnote{#1}[#2]}
            \else
              {\reversemarginpar\marginnote{#1}[#2]}
            \fi}
\newcommand{\nvcol}[1]{\textcolor{Black}{#1}}
\newcommand{\smcol}[1]{\textcolor{NavyBlue}{#1}}
\newcommand{\mbcol}[1]{\textcolor{Black}{#1}}
\newcommand{\nvt}[1]{\nvcol{#1}}
\newcommand{\mbi}[1]{\mbcol{[#1 ---~MB]}}
\newcommand{\smi}[1]{\smcol{[#1 ---~SM]}}
\newcommand{\nvi}[1]{\nvcol{[#1 ---~NV]}}
\newcommand{\nvr}[2][0pt]{{\rightmarginnote{\small\nvi{#2}}[#1]}}
\newcommand{\mbr}[2][0pt]{{\rightmarginnote{\small\mbi{#2}}[#1]}}
\newcommand{\smr}[2][0pt]{{\rightmarginnote{\small\smi{#2}}[#1]}}
\newcommand{\nvl}[2][0pt]{{\leftmarginnote{\small\nvi{#2}}[#1]}}
\newcommand{\flagtext}[1]{\textcolor{red}{\textbf{#1}}}
\newcommand{\tocite}[1]{\flagtext{[#1]}}
\newcommand{\todo}[1]{\flagtext{(to do: #1)}}
\newcommand{\tocheck}{\flagtext{(to do: check this)}}
\newcommand{\comp}{\mathbin{\fatsemi}}
\newcommand{\mg}{\mkern4mu\rule[-2.5pt]{1.25pt}{10pt}\mkern4mu}
\newcommand{\one}{\mathbbm{1}}
\newenvironment{nalign}{
    \begin{equation}
    \begin{aligned}
}{
    \end{aligned}
    \end{equation}
    \ignorespacesafterend
}
\newcommand{\id}{\mathsf{id}} 
\newcommand{\del}{\mathsf{del}} 
\newcommand{\cpy}{\mathsf{copy}} 
\newcommand{\mathoverlap}[2]{\mathpalette\mathoverlap@{{#1}{#2}}}
\newcommand{\mathoverlap@}[2]{\mathoverlap@@{#1}#2}
\newcommand{\mathoverlap@@}[3]{\ooalign{$\m@th#1#2$\crcr\hidewidth$\m@th#1#3$\hidewidth}}
\newcommand{\klcirc}{\bullet} %
\newcommand*{\smallklcirc}{\raisebox{0.18ex}{\scalebox{0.66}{$\klcirc$}}}
\newcommand{\klto}{\mathoverlap{\rightarrow}{\smallklcirc\,}}
\newcommand{\xklto}[1]{\mathoverlap{\xrightarrow{#1}}{\smallklcirc\,}}
\newcommand{\todot}{\mathrel{\klto}}
\newcommand{\sprime}{^{\smash{\mathrlap{\prime}}}}
\newcommand{\MinLength}[3]{%
\ifthenelse{\lengthtest{\the#1<\the#2}}%
           {\setlength{#3}{#1}}%
           {\setlength{#3}{#2}}%
}
\newcommand{\MaxLength}[3]{%
\ifthenelse{\lengthtest{\the#1>\the#2}}%
           {\setlength{#3}{#1}}%
           {\setlength{#3}{#2}}%
}
\newlength{\xscale}
\newlength{\yscale}
\newlength{\mscale}
\newcommand{\setstringscale}[2] {%
	\setlength{\xscale}{#1}%
	\setlength{\yscale}{#2}%
	\MinLength{\xscale}{\yscale}{\mscale}%
}
\newcommand{\stringdisplaystyle} {%
	\setstringscale{0.33cm}{0.23cm}%
}
\newcommand{\stringinlinestyle} {%
	\setstringscale{0.2cm}{0.12cm}%
}
\newlength{\boxscale}
\newcommand{\autostringstyle}{%
    \if@display
	\stringdisplaystyle
    \else
	\stringinlinestyle
    \fi
}
\newcommand{\wire}[1][1]{\draw[rounded corners=#1\mscale]}
\newcommand{\fg}[1]{\begin{pgfonlayer}{foreground}#1\end{pgfonlayer}}
\newcommand{\blackdot}[1]{\fg{\draw[fill=black] (#1) circle (0.25\mscale);}}
\newcommand{\sqbox}[3]{
\fg{
	\draw (#1) node[
		fill=white,
		rounded rectangle,
		draw,
		minimum height=#3\boxscale,
		inner sep = 1pt,
		rounded rectangle left arc=none,
		rounded rectangle right arc=none,
		rounded rectangle arc length = 180,
	] {\small #2};}
}
\newcommand{\stoch}[3]{
\fg{
	\draw (#1) node[
		fill=white,
		rounded rectangle,
		draw,
		minimum height=#3\boxscale,
		inner sep = 1pt,
		rounded rectangle left arc=none,
		rounded rectangle arc length = 120
	] {\small #2};
}
}
\newcommand{\llabel}[2]{\fg{\draw (#1) +(-3.5pt,0) node[inner sep = 0] {\scriptsize $#2$};}}
\newcommand{\rlabel}[2]{\fg{\draw (#1) +(3.5pt,0) node[inner sep = 0] {\scriptsize $#2$};}}
\newcommand{\ulabel}[2]{\fg{\draw (#1) +(0,5pt) node[inner sep = 0] {\scriptsize $#2$};}}
\newcommand{\dlabel}[2]{\fg{\draw (#1) +(0,-5pt) node[inner sep = 0] {\scriptsize $#2$};}}
\newcommand{\uulabel}[2]{\fg{\draw (#1) +(0,6pt) node[inner sep = 0] {\scriptsize $#2$};}}
\newcommand{\ddlabel}[2]{\fg{\draw (#1) +(0,-6pt) node[inner sep = 0] {\scriptsize $#2$};}}
\newcommand{\stringdiagram}[1]{%
\mbox{
\autostringstyle%
\begin{tikzpicture}[x=\xscale, y=\yscale, baseline={([yshift=-0.45ex]current bounding box.center)}]
\pgfdeclarelayer{grid layer}
\pgfdeclarelayer{foreground}
\pgfsetlayers{grid layer,main,foreground}
#1
\end{tikzpicture}%
}}
\newcommand{\pbox}[2]{
	\left.
	#1
	\right._{\raisebox{0.5\yscale}{$#2$}}
}
\newcommand{\inlinekernel}[3]{$\!\!\!\stringdiagram{
\wire (0,1)--(4,1);
\stoch{2,1}{$#2$}{2}
\llabel{0,1}{#1};
\rlabel{4,1}{#3};
}$}
\newcommand{\inlinemachine}[3]{$\!\!\!\stringdiagram{
\wire (0,1)--(4,1);
\wire[0.5] (0,3)--(0.5,3)--(1,2)--(2,2);
\stoch{2,1.5}{$#3$}{2}
\llabel{0,3}{ #2};
\llabel{0,1}{ #1};
\rlabel{4,1}{ #1};
}$}
\newcommand{\ysgamma}{\inlinemachine{Y}{S}{\gamma}}
\tikzstyle{function}=[fill=white, draw=black, shape=rectangle, minimum width=0.75cm, minimum height=0.75cm]
\tikzstyle{kernel}=[fill=white, draw=black, shape=rounded rectangle, rounded rectangle west arc=0pt, minimum height=0.6cm, minimum width=0.6cm, tikzit fill={rgb,255: red,240; green,240; blue,240}]
\tikzstyle{copy}=[fill=black, draw=black, shape=circle, minimum width=0.1cm, minimum height=0.1cm, inner sep=0pt]
\tikzstyle{concat}=[fill=white, draw=black, shape=circle, minimum width=0.1cm, minimum height=0.1cm]
\tikzstyle{simple}=[fill=white, draw=black, shape=circle, minimum height=0.5cm, minimum width=0.5cm]
\tikzstyle{edge-label}=[fill=white, draw=none, shape=circle, minimum height=0.1cm, minimum width=0.1cm, inner sep=0]
\tikzstyle{directed-edge}=[->]
\tikzstyle{thick}=[->, line width=1pt, tikzit draw={rgb,255: red,191; green,0; blue,64}]
\begin{document}
\title{Interpreting \nvcol{Dynamical} Systems as Bayesian Reasoners}
%
%
\author{Nathaniel Virgo\inst{1}\orcidID{0000-0001-8598-590X} \and
Martin Biehl\inst{2}\orcidID{0000-0002-1670-6855} \and
Simon McGregor\inst{3}}
%
\authorrunning{N.\ Virgo et al.}
%

\institute{$^{1}$Earth-Life Science Institute, Tokyo Institute of Technology, Tokyo 152-8550, Japan \\
$^{2}$Araya Inc., Tokyo 107-6024, Japan \\
$^{3}$University of Sussex, Falmer, UK
}

%
\maketitle              
%


\begin{abstract}
A central concept in active inference is that the internal states of a physical
system parametrise probability measures over states of the external world. These
can be seen as an agent's beliefs, expressed as a Bayesian prior \nvt{or posterior}. Here we begin
the development of a general theory that would tell us when it is appropriate to
interpret states as representing beliefs in this way. We focus on the case 
in which a system can be interpreted as performing either
Bayesian filtering or Bayesian inference. We provide formal definitions of what
it means for such an interpretation to exist, using techniques from category theory.

\keywords{Bayesian filtering  \and Bayesian Inference \and Category Theory.}
\end{abstract}
%
%
%


\section{Introduction}

A question of current interest is \emph{what does it mean for a physical system to be an agent?}
That is, given a physical system that interacts with an environment, when does it make sense to say that the system is \emph{learning about its environment} or \emph{trying to achieve a goal}, rather than merely being dynamically coupled to its environment?
Here we confine ourselves to the first of these, in a simple form:
given a physical system that is influenced by its surroundings, under what circumstances can it be said to be performing inference, such that its internal states could be said to contain `knowledge' or `beliefs' about the outside world?

Our approach has something in common with Dennet's intentional stance \cite{dennett_true_1981}, in that on the one hand we treat the question of whether a system is performing inference as a matter of interpretation, but on the other hand we draw a strong connection between interpretations and the underlying physical dynamics.
We provide a formal notion of interpretation for the particular cases we are interested in (Bayesian filtering and Bayesian inference), such that the question of \emph{whether a system can be consistently interpreted in a particular way} is mathematically well-defined and has a definite answer.

The question of how to identify agents in physical systems has been addressed in several ways.
Some works focus on whether a system's actions can be seen as pursuing a goal \cite{orseau_agents_2018,kolchinsky_semantic_2018}, with \cite{mcgregor_bayesian_2017} taking an explicitly Dennettian approach.
Others focus more on the question of identifying which part of a system should be identified as the agent \cite{biehl_towards_2016,beer_autopoiesis_2004,beer_cognitive_2014,krakauer_information_2020,albantakis_macro_2020}, or on understanding what the external world looks like from the agent's point of view \cite{ay_umwelt_2015,beer_autopoiesis_2004,beer_cognitive_2014}.
%
Another approach, which we take here, is to regard the system's internal state as \emph{parametrising} its beliefs.
That is, there is a function mapping the system's physical state to a probability measure that can be seen as a Bayesian prior.
This is a key component of work on the Free Energy Principle (FEP) \cite{friston_free_2019,da_costa_bayesian_2021,parr_markov_2020} and also of \cite{still_thermodynamics_2012}, although our approach differs from these in that our model is not derived from the dynamics of the true environment. 
The notion that agency is closely related to parametrisation is also central to recent approaches to agency based on category theory \cite{capucci_towards_2021,st_clere_smithe_cyber_2021,capucci_translating_2021}.

The idea that states of a system parametrise Bayesian probability distributions appears more broadly in the Bayesian brain literature \cite{knill_bayesian_2004,ma_neural_2014} and has also arisen in cell biology \cite{libby_noisy_2007,nakamura_connection_2021}.
%
%
Our contribution is to make the concept much more formal, and in so doing, to shed light on the precise relationship between the interpretation level and the underlying physical level.

On a technical level we formulate the problem of Bayesian filtering at an abstract level using the tools of category theory.
%
\nvt{This part of the work is inspired by}
 \cite{jacobs_channel-based_2020}, which formulates the notion of conjugate prior \nvt{in terms of category theory} in a similar way.
\nvt{Conjugate priors are convenient because they ensure the functional form of the posterior is the same as the posterior, in Bayesian belief updating.
At the same time they can be seen as a special case of Bayesian filtering, as we explain in \cref{consistency-for-bayesian-interpretations} and \cref{unpacking-filtering}.}
%
\nvt{Formulating filtering in this way allows us to clearly} distinguish the role of the physical machine from the more semantic level at which we can talk about priors and posteriors.
%
\nvt{We then}
flip this perspective around, asking, for a given system, whether it can be interpreted as implementing Bayesian filtering, and if so under which model.
In this respect our approach generalises that of \cite{biehl_dynamics_2020}, who studied the special case of the Dirichlet distribution (which is conjugate prior to a categorical distribution) in the context of interpreting a physical system as performing inference.

%
%
One thing our approach makes clear is that a given system may have more than one interpretation, and the ``correct'' interpretation cannot be determined from the system's dynamics alone.
Another important aspect of our framework is that an interpretation only depends on the system's internal dynamics, and not on the dynamics of the external world.
Because of this, a system's presumed beliefs might not match the true dynamics of the world at all --- its beliefs might be consistent but incorrect --- and indeed we can construct examples where the world ``as the system sees it'' has a different causal structure from the world as it really is. (Compare \cref{bayesian-network-context} to \cref{bayesian-network-iid}.)

\section{Definitions and Results}

\subsection{Technical preliminaries}
\label{preliminaries}

In the following, we use the concepts of \emph{measurable space} and \emph{Markov kernel}.
By measurable space we mean a set equipped with a $\sigma$-algebra, i.e.\ the kind of thing on which a probability measure can be defined.
An example of a measurable space is a finite set, and a reader who is only interested in the finite case could mentally substitute ``finite set'' wherever we say ``measurable space'' and ``probability distribution'' wherever we say ``probability measure.''

Given a measurable space $X$, we write $P(X)$ for the set of all probability measures over $X$.
Given measurable spaces $X$ and $Y$, a Markov kernel is a function $\kappa\colon X\to P(Y)$ that maps elements of $X$ to probability measures over $Y$, with an additional technical requirement that the function $\kappa$ be measurable.
Markov kernels are closely related to conditional probability, but they are different in that a Markov kernel defines a probability measure over $Y$ for every element $X$, regardless of whether any probability distribution has been defined over $X$ or what form such a distribution has.
In the case where $Y$ is a finite set, we write $\kappa(y\mg x)$ for the probability that the kernel $\kappa$ assigns to $y$ when given the input~$x$.

We also make use of a graphical notation known as \emph{string diagrams}, which comes from the literature on category-theoretic probability \cite{cho_disintegration_2019,fritz_synthetic_2020}.
This notation provides a convenient way to reason about how Markov kernels relate to one another.
The full technical description of this calculus can be found in \cite{fritz_synthetic_2020} or \cite{cho_disintegration_2019}, but we provide a brief intuitive explanation in \cref{string-diagram-intro}, aimed at readers with no category theory background, along with references to further reading.

\subsection{Machines and interpretations}

We are concerned with interpreting a physical system as performing inferences of some kind on its inputs.
We therefore begin by defining a notion corresponding to a physical system that can take an input from the outside world, which leads to a change in state, which might be stochastic and might depend on the input.

\begin{definition}
 A \emph{machine} consists of two measurable spaces, $Y$ (the state space) and $S$ (the input space), together with a Markov kernel $\gamma\colon Y\times S \to P(Y)$ called the update kernel.
\end{definition}

The idea is that the machine is in reality only a part of some larger stochastic process.
We might typically think of this broader context as represented by the following causal Bayesian network, although this is not the only possibility.
\begin{equation}
\label{bayesian-network-context}
\raisebox{-0.5\height}{\includegraphics[height=5.5em]{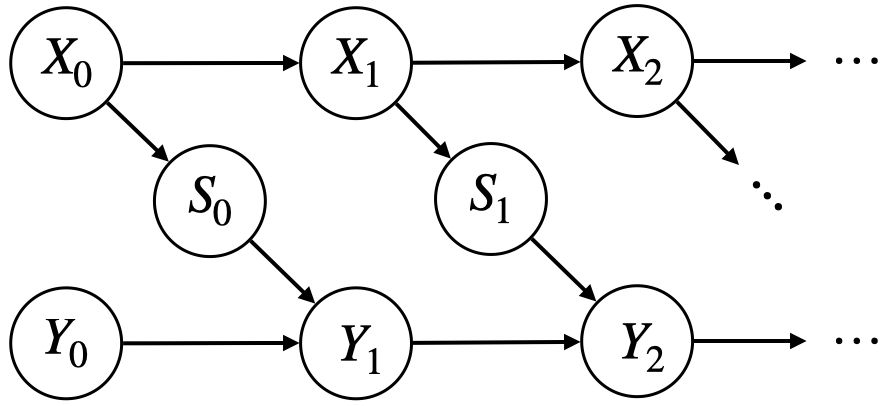}}
\end{equation}
Here the variables $X_0,X_1,\dots$ represent the states of the external world at different times, which are hidden from the machine's perspective. $S_0,S_1,\dots$ are the observable ``sensor values'' that the machine can access, all of which have the same sample space, given by $S$. Similarly, $Y_0,Y_1,\dots$ are the machine's internal state at each time step and each has the sample space $Y$.
We assume that each of the nodes $Y_1,Y_2,\dots$ in this network are associated with the same kernel, $\gamma$.

The nodes $X_0$ and $Y_0$ represent distributions over initial states of the agent and the world.
A common ancestor of these nodes could be added, to represent the possibility that the initial states are correlated.
%
\nvt{There is no need for any stationarity assumption in our framework; the initial distribution in \cref{bayesian-network-context} can be arbitrary.}

Even though we might think of the machine as existing in a context along these lines, our notion of interpretation does not depend on the machine's external environment at all, but \emph{only} on the machine's internal dynamics.
That is, on the measurable spaces $Y$ and $S$ and on the update kernel $\gamma$.
This is because, informally speaking, a reasoner may have consistent \emph{but wrong} beliefs about the external world, and we wish to include this possibility in our framework. 
(In particular, a system that reasons correctly in one environment might be placed in a different environment where the same inferences are no longer correct.)
Our notion of interpretation will include a notion of ``beliefs'' about the external world.
These must be consistent with the machine's internal dynamics and the inputs it receives, but they need not relate in any particular way to any ground truth about the process by which those inputs are generated, since we regard that as something the reasoner has no direct access to.

Because of this we will rarely reason about causal models directly, and instead will express our definitions directly in terms of Markov kernels and the relationships between them.
%
%
%
The string diagram notation, explained \cref{string-diagram-intro}, will be indispensable for this.



We now describe our central concept: an \emph{interpretation} of a machine.
Here we will present only two kinds of interpretations, \emph{Bayesian filtering interpretations} and an important special case, \emph{Bayesian inference interpretations}, but we expect these to fit naturally into a much broader family of concepts.

The most important component of an interpretation is what we term an \emph{interpretation map}, a function that maps the physical state of a machine to something that we can think of as a belief about some external world.
In the cases we are concerned with in this paper, a ``belief'' will be a probability measure over some hidden variable.
%
%
An interpretation of a machine will be an interpretation map together with some additional data (the \emph{model} as defined below), such that a \emph{consistency equation} is obeyed.

For a given machine there may be many possible interpretations. 
We use the term \emph{reasoner} for a machine together with a particular choice of interpretation.

In the case of {Bayesian inference interpretations} and {Bayesian filtering interpretations}, in which `beliefs' are probability measures over a hidden variable, the interpretation map is a Markov kernel $\psi_H\colon Y\to P(H)$.
Instead of interpreting this stochastically, we think of it as a function that takes a state $y\in Y$ and returns a probability measure $\psi_H(y)$ (the belief) over $H$.
This is to be thought of as the reasoner's subjective knowledge (a Bayesian prior or posterior) about the hypotheses in $H$.
%
%
This kernel plays quite a different role from those associated with the graphical model in \cref{bayesian-network-context}, since its purpose is to map states to beliefs, rather than to model causal influences between random variables.

For an interpretation to be {consistent}, the reasoner's beliefs must update in the appropriate way when the machine receives new data.
The precise meaning of this will depend on what kind of interpretation we are using.
For the interpretations we describe here it is given by Bayes' rule, in a form that we state precisely below.
In future work we can imagine interpretations based on other principles, such as approximate Bayes (e.g.\ via the free energy principle).

The idea is that a machine by itself is merely a (possibly stochastic) dynamical process, but if a consistent interpretation exists then it is at least consistent to ascribe a \emph{meaning} to its states.
%

It should be noted that, for a given machine, the question of \emph{whether it can be consistently interpreted in a particular way} is in principle an empirical one, since it depends on the machine's update kernel, which can in principle be measured.
However, in general a given machine might have multiple non-equivalent consistent interpretations, and one cannot distinguish between these empirically by looking only at the system's internal dynamics.%
\footnote{We leave open the possibility that they could be distinguished by looking at some broader context, e.g.\ by discovering that a device's designer intended a particular interpretation, or that evolution selected for a particular interpretation.}
Consequently the relationship between interpretations and the empirical, physical world is rather subtle, and one should keep in mind that our notion of ``consistent reasoner'' unavoidably involves an element of choice in which interpretation to adopt.

\subsection{Consistency for Bayesian Interpretations}
\label{consistency-for-bayesian-interpretations}

We begin with the more general case of Bayesian filtering interpretations.
The idea is that a reasoner has a \emph{model} of the environment's dynamics.
This model is part of the interpretation, and need not match the true environment dynamics.

In the case of filtering, such a model can be described as a Markov kernel $\!\stringdiagram{
\wire (0,-1)--(-4,-1);
\wire[0.5] (0,-3)--(-0.5,-3)--(-1,-2)--(-2,-2);
\stoch{-2,-1.5}{$\kappa$}{2}
\rlabel{0,-3}{ S};
\rlabel{0,-1}{ H};
\llabel{-4,-1}{ H};
}$,
that is, $\kappa\colon H\to P(H\times S)$.
The idea is that $H$ is the space of possible {hidden states} of the external world, as modelled by the reasoner.
The kernel $\kappa$ models a step of this hypothetical external world's evolution, during which it both changes to a new state in $H$ and also emits an observable sensor value in~$S$.

A Bayesian filtering interpretation of a machine $\!\!\!\stringdiagram{
\wire (0,1)--(4,1);
\wire[0.5] (0,3)--(0.5,3)--(1,2)--(2,2);
\stoch{2,1.5}{$\gamma$}{2}
\llabel{0,3}{ S};
\llabel{0,1}{ Y};
\rlabel{4,1}{ Y};
}$ will then consist of a choice of interpretation map $\!\!\stringdiagram{
\wire (0,1)--(4,1);
\stoch{2,1}{$\psi_H\!$}{2}
\llabel{0,1}{Y};
\rlabel{4,1}{H};
}$ as described above, together with a choice of model $\!\stringdiagram{
\wire (0,-1)--(-4,-1);
\wire[0.5] (0,-3)--(-0.5,-3)--(-1,-2)--(-2,-2);
\stoch{-2,-1.5}{$\kappa$}{2}
\rlabel{0,-3}{ S};
\rlabel{0,-1}{ H};
\llabel{-4,-1}{ H};
}$.
The kernel $\kappa$ thus describes a reasoner's beliefs about the next hidden state and the next sensor value, given the current hidden state.

%
%

Given the kernels $\psi_H$ and $\kappa$ we can define another kernel, which we also consider to be an interpretation map,
\begin{equation}
\label{filtering-big-interpretation-map}
\pbox{
  \stringdiagram{
 \llabel{0,0}{Y};
 \wire(0,0) -- (2,0);
 \stoch{2.5,0}{$\psi_{S,H\sprime,H}$}{6};
 \wire(2,1.5) -- (5,1.5);
 \wire(2,0) -- (5,0);
 \wire(2,-1.5) -- (5,-1.5);
 \rlabel{5,1.5}{H};
 \rlabel{5,0}{H};
 \rlabel{5,-1.5}{S};
 }
 \,\coloneqq
   \stringdiagram{
 \ulabel{0.3,0}{H};
 \wire(-4,0) -- (2,0);
 \llabel{-4,0}{Y};
 \stoch{2,0}{$\kappa$}{4};
 \stoch{-2,0}{$\psi_H$}{3};
 \blackdot{-0.5,0};
 \wire[3] (-0.5,0) -- (-0.5,3) -- (4,3);
 \wire(2,1) -- (4,1);
 \wire(2,-1) -- (4,-1);
 \rlabel{4,3}{H};
 \rlabel{4,1}{H};
 \rlabel{4,-1}{S};
 }
 }{.}
\end{equation}
The kernel $\psi_{H,H',S}$ maps a state of the machine $y\in Y$ to a joint distribution over $S$ and two copies of $H$, which we think of as the reasoner's beliefs about the the next sensor value, the next hidden state, and the current hidden state.
%
%
We also define its marginals,
\begin{equation}
\label{filtering-interpretation-SH'}
\pbox{
\stringdiagram{
 \llabel{0,0}{Y};
 \wire(0,0) -- (2,0);
 \stoch{2.5,0}{$\psi_{S,H'}$}{4};
 \wire(2,0.7) -- (4.7,0.7);
 \wire(2,-0.7) -- (4.7,-0.7);
 \rlabel{4.7,0.7}{H};
 \rlabel{4.7,-0.7}{S};
 }
 \,\,\coloneqq
 \stringdiagram{
 \llabel{0,0}{Y};
 \wire(0,0) -- (2,0);
 \stoch{2.5,0}{$\psi_{S,H\sprime,H}$}{4};
 \wire(2,1) -- (4.8,1);
 \blackdot{4.8,1}
 \wire(2,0) -- (5,0);
 \wire(2,-1) -- (5,-1);
 \rlabel{5,0}{H};
 \rlabel{5,-1}{S};
 }
 \,\,=
 \stringdiagram{
 \ulabel{0.3,0}{H};
 \wire(-3,0) -- (2,0);
 \llabel{-3,0}{Y};
 \stoch{2,0}{$\kappa$}{4};
 \stoch{-1.2,0}{$\psi_H$}{3};
 \wire(2,1) -- (4,1);
 \wire(2,-1) -- (4,-1);
 \rlabel{4,1}{H};
 \rlabel{4,-1}{S};
 }
 }{}
\end{equation}
and
\begin{equation}
\label{filtering-interpretation-S}
 \pbox{
 \stringdiagram{
 \llabel{0.5,0}{Y};
 \wire(0.5,0) -- (4.7,0);
 \stoch{2.5,0}{$\psi_{S}$}{3};
 \rlabel{4.7,0}{S};
 }
 \,\,\coloneqq
 \stringdiagram{
 \llabel{0,0}{Y};
 \wire(0,0) -- (2,0);
 \stoch{2.5,0}{$\psi_{S,H\sprime,H}$}{4};
 \wire(2,1) -- (4.8,1);
 \blackdot{4.8,1}
 \wire(2,0) -- (4.8,0);
 \blackdot{4.8,0}
 \wire(2,-1) -- (5,-1);
 \rlabel{5,-1}{S};
 }
 \,\,=
 \stringdiagram{
 \ulabel{0.3,0}{H};
 \wire(-3,0) -- (2,0);
 \llabel{-3,0}{Y};
 \stoch{2,0}{$\kappa$}{4};
 \stoch{-1.2,0}{$\psi_H$}{3};
 \wire(2,1) -- (3.75,1);
 \blackdot{3.75,1};
 \wire(2,-1) -- (4,-1);
 \rlabel{4,-1}{S};
 }
}{,}
\end{equation}
which represent the reasoner's beliefs about the next hidden state and the next input, and about only the next input, respectively.
We can now state the consistency requirement for a Bayesian filtering interpretation.

\begin{definition}
\label{bayesian-filtering-interpretation-definition}
 Given a machine $\!\!\!\stringdiagram{
\wire (0,1)--(4,1);
\wire[0.5] (0,3)--(0.5,3)--(1,2)--(2,2);
\stoch{2,1.5}{$\gamma$}{2}
\llabel{0,3}{ S};
\llabel{0,1}{ Y};
\rlabel{4,1}{ Y};
},$ a \emph{consistent Bayesian filtering interpretation} of $\gamma$ is given by a measurable space $H$ together with Markov kernels $\!\!\stringdiagram{
\wire (0,1)--(4,1);
\stoch{2,1}{$\psi_H\!$}{2}
\llabel{0,1}{Y};
\rlabel{4,1}{H};
}$ and $\!\!\stringdiagram{
\wire (0,1)--(2,1);
\wire (2,0)--(4,0);
\wire (2,2)--(4,2);
\stoch{2,1}{$\kappa$}{2.5}
\llabel{0,1}{H};
\rlabel{4,2}{H};
\rlabel{4,0}{S};
}$ that satisfy
\begin{equation}
\label{bayesian-filtering-consistency}
\stringdiagram {
 \wire (0,0) -- (11.5,0);
 \ulabel{0,0}{Y};
 \ulabel{11.5,0}{Y};
 \blackdot{1,0};
 \wire[3] (1,0) -- (1,3) -- (5,3);
 \wire (5,2) -- (11.5,2);
 \ulabel{11.5,2}{S};
 \stoch{4.5,3}{$\psi_{S,H'}$}{4};
 \stoch{9.5,0.25}{$\gamma$}{3};
 \blackdot{7.5,2};
 \ulabel{8,2}{S};
 \wire[1.5] (7.5,2) -- (7.5,0.5) -- (9.5,0.5);
 \wire[2] (5,4) -- (11.5,4);
 \ulabel{11.5,4}{H};
}
\,\, = 
\pbox{
\stringdiagram {
 \wire (2,0) -- (15,0);
 \ulabel{2,0}{Y};
 \ulabel{15,0}{Y};
 \blackdot{3,0};
 \wire[2] (3,0) -- (3,2) -- (6,2);
 \stoch{5.25,2}{$\psi_{S}$}{3};
 \wire (6,2) -- (15,2);
 \ulabel{15,2}{S};
 \stoch{9,0.25}{$\gamma$}{3};
 \blackdot{7,2};
 \ulabel{7.4,2}{S};
 \wire[1.5] (7,2) -- (7,0.5) -- (9,0.5);
 \wire[4] (10.5,0) -- (10.5,4) -- (15,4);
 \blackdot{10.5,0};
 \ulabel{11.3,0}{Y};
 \stoch{13.2,4}{$\psi_{H}$}{3};
 \ulabel{15,4}{H};
}
}{,}
\end{equation}
with $\psi_{S,H'}$ and $\psi_S$ given by \cref{filtering-interpretation-SH',filtering-interpretation-S}.
\end{definition}

The left-hand-side of \cref{bayesian-filtering-consistency} can be read as sampling from the reasoner's joint beliefs about the next hidden state and the next input, and then feeding the corresponding value of $S$ into the machine as an input.
The right-hand-side can be read as sampling from the reasoner's belief about its next input, feeding the result in as its next input, and then sampling from its resulting (posterior) belief about what it now sees as the current hidden state.
The equation says that these two procedures must give the same result.

In \cref{unpacking-filtering} we give some further intuition for this definition, by considering the case where $S$, $Y$ and $H$ are finite sets.
An important consequence is that, in the finite case, whether a given interpretation is consistent or not only depends on which states are reachable from which other states under a given input; the actual transition probabilities are irrelevant beyond that.
We expect an analogous statement to hold more generally.

Another important consequence discussed in \cref{unpacking-filtering} is that there is a large class of machines that only admit trivial interpretations.
At least in the finite case, non-trivial interpretations can only exist if some transitions are impossible, in the sense that there is a zero probability of transitioning from $y\in Y$ to $y'\in Y$ under the input $s\in S$.

In \cref{bayesian-filtering-proof} we give a more technical proof, using string diagrams, that at least in the case of deterministic machines, a machine with a consistent Bayesian filtering interpretation can indeed be regarded as performing a Bayesian filtering task.
This can be seen as extending some of the ideas in \cite{jacobs_channel-based_2018} on conjugate priors to the more general case of Bayesian filtering.

An important special case of \cref{bayesian-filtering-interpretation-definition} is where the model $\kappa$ is such that the hidden state does not change over time.
In this case $H$ can be thought of as an unknown parameter of a statistical model, with the sensor inputs being independent and identically distributed samples from the model.
We call these \emph{Bayesian inference interpretations}.
%
%
%
Although the following follows from \cref{bayesian-filtering-interpretation-definition} under this assumption, we write it as a separate definition.

\begin{definition}
\label{bayesian-inference-interpretation-definition}
 Given a machine $\!\!\stringdiagram{
\wire (0,1)--(4,1);
\wire[0.5] (0,3)--(0.5,3)--(1,2)--(2,2);
\stoch{2,1.5}{$\gamma$}{2}
\llabel{0,3}{ S};
\llabel{0,1}{ Y};
\rlabel{4,1}{ Y};
},$ a \emph{consistent Bayesian inference interpretation} of $\gamma$ is given by a measurable space $H$ together with Markov kernels $\!\!\stringdiagram{
\wire (0,1)--(4,1);
\stoch{2,1}{$\psi_H\!$}{2}
\llabel{0,1}{Y};
\rlabel{4,1}{H};
}$ and $\!\!\stringdiagram{
\wire (0,1)--(4,1);
\stoch{2,1}{$\phi$}{2}
\llabel{0,1}{H};
\rlabel{4,1}{S};
}$ that satisfy
\begin{equation}
\label{bayesian-inference-consistency}
\stringdiagram {
 \wire (0,0) -- (11.5,0);
 \ulabel{0,0}{Y};
 \ulabel{11.5,0}{Y};
 \blackdot{1,0};
 \wire[2] (1,0) -- (1,2) -- (11.5,2);
 \ulabel{11.5,2}{S};
 \stoch{3,2}{$\psi_H\!$}{3};
 \stoch{6,2}{$\phi$}{3};
 \blackdot{4.5,2};
 \ddlabel{4.5,2}{H};
 \stoch{9.5,0.25}{$\gamma$}{3};
 \blackdot{7.5,2};
 \ulabel{8,2}{S};
 \wire[1.5] (7.5,2) -- (7.5,0.5) -- (9.5,0.5);
 \wire[2] (4.5,2) -- (4.5,4) -- (11.5,4);
 \ulabel{11.5,4}{H};
}
\, = \!
\pbox{
\stringdiagram {
 \wire (0,0) -- (15,0);
 \ulabel{0,0}{Y};
 \ulabel{15,0}{Y};
 \blackdot{1,0};
 \wire[2] (1,0) -- (1,2) -- (15,2);
 \ulabel{15,2}{S};
 \stoch{3,2}{$\psi_H\!$}{3};
 \ulabel{4.25,2}{H};
 \stoch{5.5,2}{$\phi$}{3};
 \stoch{9,0.25}{$\gamma$}{3};
 \blackdot{7,2};
 \ulabel{7.5,2}{S};
 \wire[1.5] (7,2) -- (7,0.5) -- (9,0.5);
 \wire[4] (10.5,0) -- (10.5,4) -- (15,4);
 \blackdot{10.5,0};
 \ulabel{11.3,0}{Y};
 \stoch{13.2,4}{$\psi_H\!$}{3};
 \ulabel{15,4}{H};
}
}{.}
\end{equation}
\end{definition}

In \cref{conjugate-priors} we show that this definition is closely related to the notion of a conjugate prior, and in particular to the definition of conjugate prior given by \cite{jacobs_channel-based_2018} in terms of Markov kernels and string diagrams.
Finally, in \cref{unpacking-inference} we unpack \cref{bayesian-inference-consistency} in more familiar terms, showing that in the discrete case it does indeed correspond to Bayes' theorem as usually understood.

In a Bayesian inference interpretation we interpret the {reasoner} as 
\emph{assuming} its inputs are i.i.d.\ samples from some distribution, but this need not mean that they actually are.
Under a consistent Bayesian inference interpretation a reasoner is interpreted as modelling the world as if its causal structure is as follows, where each of the `$S$' nodes is associated with the kernel $\phi$.
\begin{equation}
\raisebox{-0.5\height}{\includegraphics[height=5.5em]{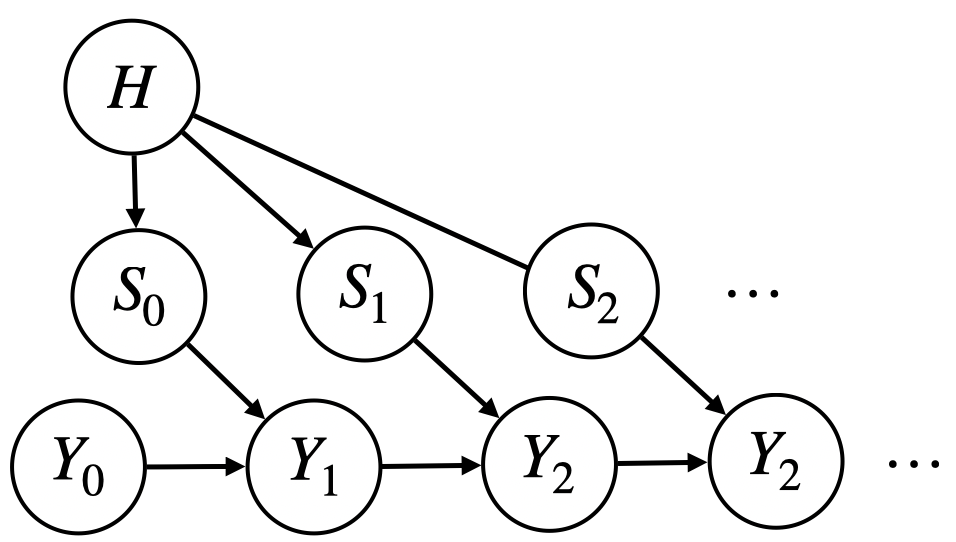}}
\label{bayesian-network-iid}
\end{equation}
However, the true dynamics of the world could still correspond to the Bayesian network in \cref{bayesian-network-context} or some other causal structure.
The reasoner is simply (interpreted as being) unable to perceive the correlations among its inputs.

%

%
%


In \cref{app:examples} we present three examples of machines with consistent Bayesian inference interpretations. 
The first example is a non-deterministic finite machine with three states. 
The consistent Bayesian inference interpretation we provide also involves subjectively impossible observations.

The second example is a countably infinite deterministic machine that counts occurrences of each of two possible observations.
The consistent Bayesian interpretation we provide intuitively considers this machine to be inferring the bias of a coin that is flipped to cause its observations.  
It uses the standard conjugate prior to its model as an interpretation map. 

The third example is also a countably infinite machine with two possible observations.
However, it only stores the difference between the number of the two possible observations. 
Intuitively, instead of considering all possible biases of a coin as the second example the consistent Bayesian interpretation we provide for this machine infers which of two specific biases of a coin is causing its observations.
We also hint at how the second machine can ``inherit'' this interpretation.

\section{Discussion}
We see the main contribution of this work as a conceptual one. 
The consistency equation involved in \cref{bayesian-filtering-interpretation-definition,bayesian-inference-interpretation-definition}
 can be seen either as a constraint on the machines that have a particular interpretation or as a constraint on the interpretations that a particular machine allows. 
Every machine has an interpretation with respect to a trivial model (one that has no parameter), but in order to have an interpretation with respect to a non-trivial model, a machine must at least obey the constraints discussed in \cref{unpacking-filtering}.

Similar consistency equations should exist for approximate Bayesian filtering as well as for related inference problems like Bayesian smoothing or prediction.
Even non-Bayesian normative theories about what a system should be representing and how this should change under external influences will probably have associated consistency equations.

Another direction to extend the concept of consistency equations is to take into account a possible influence of the machine's state on the external world.
On the interpretation side this would mean going beyond perception and representation to also include deliberate actions that combine with beliefs and possibly also with goals.
%
A machine with such an interpretation might deserve the term \emph{agent} instead of reasoner.

Our work is related to other current efforts to capture the notion of agent using category theory.
These include approaches to Bayesian inference \cite{smithe_bayesian_2020} and game theory \cite{bolt_bayesian_2019}.
The idea that agency is related to parametrisation has also arisen in these contexts \cite{capucci_towards_2021,st_clere_smithe_cyber_2021,capucci_translating_2021}.
These works focus on the notion of a \emph{lens} and its generalisations.
It is interesting to note that our notions of interpretation seem to be different, and somewhat simpler, in that they lack the bidirectional nature of lenses.
We conjecture that a more lens-like bidirectional structure would be needed if we were to consider Bayesian smoothing rather than Bayesian filtering.
It will be interesting in future work to better understand the relationship between lens-like categories and the concept of interpretation developed in the present work.

\subsection{Relation to the Free Energy Principle}
\label{sec:FEP}

Let us now consider the relation to the Free Energy Principle (FEP) which is also referred to as active inference.
The relevant part of FEP is the part called ``Bayesian mechanics'' \cite{friston_free_2019,da_costa_bayesian_2021,parr_markov_2020}.
It seems that the ingredients for an interpretation map can be found in this literature:
\cite[eq. 3.3]{da_costa_bayesian_2021} describes a Markov kernel of an appropriate type, as we detail in \cref{app:FEP}.
However, it is not currently clear to us whether the FEP can be formulated in terms of a consistency equation that this kernel obeys.
Presumably, such an equation would be different from our \cref{bayesian-filtering-interpretation-definition,bayesian-inference-interpretation-definition}, because the FEP is concerned with approximate rather than exact inference and deals with continuous time.

One important difference between our approach and current formulations of FEP is that the FEP requires a stationarity assumption on the true dynamics of the agent-environment system.
It seems to us that this is used to derive something that corresponds to a model.
In our approach the reasoner's and the ``ground truth'' dynamics of the environment are different things, and partly for this reason we need no stationarity assumption.
We see this conceptual separation as an advantage of the consistency equation approach, and we believe that by incorporating these ideas it might be possible to formulate the FEP in a way that would make its assumptions clearer and perhaps even avoid the need for the stationarity assumption.
Although we do not currently know the precise relationship between our work and the FEP at a technical level, we explore it in more detail in \cref{app:FEP}.

\bibliographystyle{splncs04}
\bibliography{./bibliography}

\appendix

\section{Category-Theoretic Probability and String Diagrams}
\label{string-diagram-intro}

In this paper we use some concepts from category-theoretic probability, and in particular we use a notation known as string diagrams.
A full introduction to these topics would be out of scope of the paper, but we include here an informal introduction to the topic.
We do this because, to our knowledge, no concise introduction currently exists that is focused on (classical) probability and does not assume a background in category theory. 
We assume that the reader knows the definition of a category, but not much more than that.

\Cref{string-diagram-intro-main} introduces the basic concepts, mostly in the context of discrete probability.
In \cref{measure-theory} we briefly comment on how this extends to the general case of measure-theoretic probability with very little extra work.
In \cref{conditionals-and-bayes} we explain how to reason about conditional probabilities and Bayes' theorem within this category-theoretic context.

These sections contain no original material.
Their purpose is to give the reader enough information to be able to read the string diagram equations in the main text and later sections of the appendix without needing to consult a category theory text.
However, they are intended neither as an authoritative technical reference nor as a comprehensive review, and readers should consult the cited references for full details.

\subsection{Introduction to String Diagrams and Category-Theoretic Probability}
\label{string-diagram-intro-main}

A full technical introduction to the use of string diagrams in probability can be found in \cite{fritz_synthetic_2020} or the earlier \cite{cho_disintegration_2019}, but these works require some knowledge of category theory.
The string diagram notation predates its use in probability and has many other applications.
One could consult \cite{baez_physics_2011,coecke_categories_2011,fong_invitation_2019,coecke_picturing_2017} for tutorial introductions to diagrammatic reasoning in other fields, of various different flavours.
Here we present it somewhat informally and only in the context of probability.

It should be kept in mind that, despite our somewhat informal introduction, string diagrams are formal expressions.
The main difference between them and the more familiar kind of mathematical expression formed from strings of symbols is their two-dimensional syntax.
This makes it easier to express certain concepts. (Particularly those relating to joint distributions, in the case of probability.)

We use the so-called \emph{Markov category} approach to probability \cite{fritz_synthetic_2020}.
The main idea here is to express everything in terms of \emph{measurable spaces} and \emph{Markov kernels}, whose definitions we outlined in the main text.%
\footnote{In fact for most of the paper we will work much more abstractly than this. It would be more correct to say ``objects in a Markov category'' wherever we say ``measurable space'' and ``morphisms in a Markov category'' wherever we say ``Markov kernel,'' since for most of the paper we will reason at the category level, and we will not directly invoke the definition of a measurable space. We have chosen to use the more concrete terms because they express a clear intuition for how these objects and morphisms are intended to be interpreted.}
To explain how the framework works, let us consider the special case where the only measurable spaces we are interested in are finite sets (with their power sets as their $\sigma$-algebras).
If $A$ and $B$ are finite sets then a Markov kernel can be thought of as just a function $f\colon A\to P(B)$, where $P(B)$ is the set of all probability distributions over $B$.
(The set $P(B)$ may be thought of as a $(|B|-1)$-dimensional simplex, consisting of all those vectors in $\mathbb{R}^{|B|}$ whose components are all non-negative and sum to 1.)
Such a function amounts to a $|B|$-by-$|A|$ stochastic matrix, although some care needs to be taken over which rows correspond to which elements of $B$ and which columns to which elements of $A$.

In this finite case, we write $f(b\mg a)$ to denote the probability that the kernel $f$ assigns to the outcome $b\in B$ when given the input $a\in A$.
We use a thick vertical line to indicate a close relationship to conditional probability while also emphasising that the concept is different:
given a kernel $f\colon A\to P(B)$ the quantities $f(b\mg a)$ are always defined, regardless of whether any probability distribution has been defined over $A$, and regardless of whether $a$ has a nonzero probability according to such a distribution.
More common notations include $|$~or~$;$ in place of $\mg$.

We also write $f(a)$ for the probability distribution over $B$ that the function $f$ returns when given the input $a$.
We could say that $f(b\mg a)$ is defined as $f(a)(b)$.

Given Markov kernels $f\colon A\to P(B)$ and $g\colon B\to P(C)$, we can compose them to form a new kernel of type $A\to P(C)$.
We write this $f\comp g$.
It is given by
\begin{equation}
\label{finite_kernel_composition}
 (f\comp g)(c\mg a) = \sum_{b\in BY} f(b\mg a)\,g(c\mg b).
\end{equation}
In this finite case this is simply matrix multiplication, and we could have denoted it $gf$ instead of $f\comp g$ accordingly.
(Another common notation is $g\circ f$.)
We prefer $f\comp g$ because it puts $f$ and $g$ in the same order that they will appear in string diagrams.

It is straightforward to show that composition is associative, that is
\begin{equation}
\label{associativity-law}
(f\comp g)\comp h = f\comp (g\comp h).
\end{equation}
In addition, for every finite set $A$ there is an identity kernel, which amounts to just the $|A|$-by-$|A|$ identity matrix.
We write this as $\id_A$ and define it by $\id_A(a'\mg a) = \delta_{a,a'}$.
For every Markov kernel $f\colon A\to P(B)$ we have
\begin{equation}
\label{identity-law}
 \id_A\comp f = f = f\comp \id_B.
\end{equation}

These two facts mean that there is a {category} whose objects are finite sets and whose morphisms are Markov kernels between finite sets.
This category is called $\mathsf{FinStoch}$.

Since Markov kernels are morphisms in a category, we will often write $f\colon A \todot B$ instead of $f\colon A\to P(B)$, using the dotted arrow $\todot$ to distinguish morphisms in $\mathsf{FinStoch}$ and related categories from ordinary functions.
(In the main text we continue writing them as functions in order to avoid introducing new notation.)

The composition of Markov kernels can be generalised to the case of measure-theoretic probability, which allows us to reason about continuous probability and more general probability measures using the same kinds of diagram and much of the same reasoning.
We briefly discuss this in more detail in \cref{measure-theory}.
The main difference is that composition becomes integration over measures rather than summation.
%
%

Probability measures themselves may be seen as a special case of Markov kernels.
Consider a set with a single element, denoted $\one = \{\star\}$.
(The identity of the element does not matter because all one-element sets are isomorphic to each other.
Category theorists often speak of ``the one-element set'' for this reason. We use a star to denote the element.)
Then a Markov kernel $p\colon \one\todot A$ is a function $p\colon \one\to P(A)$, which takes an element of $\one$ and returns a probability measure over $A$.
Since there is only one element of $\one$ this means that the kernel $p$ only defines a single probability measure over~$A$.
We therefore think of Markov kernels $\one\todot A$ and probability measures over $A$ as essentially the same thing.
%

%

We now begin to introduce the string diagram notation.
A Markov kernel $f\colon A\todot B$ will be denoted
\begin{equation}
\pbox{
\stringdiagram {
 \wire (0,0) -- (4,0);
 \stoch{2,0}{$f$}{3};
 \ulabel{0,0}{A};
 \ulabel{4,0}{B};
 }
 }{.}
\end{equation}
This expression means much the same thing as the notation $f\colon A \todot B$.
It is just a formal symbol denoting the kernel $f$, annotated with type information.
%

The composition of kernels $f\colon A\todot B$ and $g\colon B\todot C$ is written
\begin{equation}
\label{composition-string-diagram}
\pbox{
\stringdiagram {
 \wire (0,0) -- (4,0);
 \stoch{2,0}{$f\comp g$}{3};
 \ulabel{0,0}{A};
 \ulabel{4,0}{C};
 }
 \quad=\quad
\stringdiagram {
 \wire (0,0) -- (7,0);
 \stoch{2,0}{$f$}{3};
 \ulabel{0,0}{A};
 \ulabel{3.5,0}{B};
 \stoch{5,0}{$g$}{3};
 \ulabel{7,0}{C};
 }
 }{.}
\end{equation}
The left and right hand side of this equation are just two different ways to write the composite kernel $f\comp g$, as defined by \cref{finite_kernel_composition} or its measure-theoretic generalisation.

In reading a diagram like the right-hand side of \cref{composition-string-diagram} we find it helpful to imagine an element of $A$ travelling along the wire from the left.
As it passes through the kernel $f$ it is stochastically transformed into an element of $B$, in a way that might depend on its original value.
It then travels further to the right and is stochastically transformed by $g$ into an element of $C$.
\Cref{finite_kernel_composition} can be seen as describing this process.

In string diagrams a special notation is used for identity kernels (or identity morphisms more generally): an identity kernel $\id_A$ is drawn simply as a wire with no box on it,
\begin{equation}
\pbox{
 \stringdiagram {
 \wire (0,0) -- (4,0);
 \ulabel{0,0}{A};
 }
 }{.}
\end{equation}
For any Markov kernel $f\colon A\todot B$ the identity law \cref{identity-law} can then be written
\begin{equation}
\begin{aligned}
\stringdiagram {
 \wire (0,0) -- (4,0);
 \stoch{2,0}{$f$}{3};
 \ulabel{0,0}{A};
 \ulabel{4,0}{B};
 }
 & \quad=\quad \stringdiagram {
 \wire (-2,0) -- (4,0);
 \stoch{2,0}{$f$}{3};
 \ulabel{-2,0}{A};
 \ulabel{4,0}{B};
 } \\[1em]
&\quad=\quad
\pbox{\stringdiagram {
 \wire (0,0) -- (6,0);
 \stoch{2,0}{$f$}{3};
 \ulabel{0,0}{A};
 \ulabel{6,0}{B};
 }
 }{.}
\end{aligned}
\end{equation}
This allows us to think of the wires as stretchy: we can extend and contract them at will.
We will think of the wires as continuously deformable, rather than extending and contracting in discrete units.
This is justified by the formal theory of string diagrams.
(One may informally think of the wire itself as an infinite chain of identity kernels, all composed together.)
This ability to continuously deform diagrams turns out to be an extremely powerful and useful idea.

Another special notation is used for one-element sets\footnote{or in a more general context, the unit object of a monoidal category}:
they are drawn as no wire at all.
%
%
For this reason a probability measure over $A$, that is, a kernel $p\colon \one\todot A$, is drawn as
\begin{equation}
\pbox{
\stringdiagram {
 \wire (2,0) -- (4,0);
 \stoch{2,0}{$p$}{3};
 \ulabel{4,0}{A};
 }
 }{.}
\end{equation}
(Morphisms of this kind are sometimes known as ``states,'' and they are often drawn as a triangle rather than a box, though here we draw them in the same style as other morphisms.)

It is worth noting that the kernels $p$ and $f$ above can be composed, yielding
\begin{equation}
\pbox{
\stringdiagram {
 \wire (2,0) -- (4,0);
 \stoch{2,0}{$p\comp f$}{3};
 \ulabel{4,0}{B};
 }
 \quad=\quad
\stringdiagram {
 \wire (2,0) -- (7,0);
 \stoch{2,0}{$p$}{3};
 \ulabel{3.5,0}{A};
 \stoch{5,0}{$f$}{3};
 \ulabel{7,0}{B};
 }
 }{.}
\end{equation}
%
Because of this, although the kernel $f\colon A\todot B$ is defined as a function $f\colon A\to P(B)$ mapping \emph{elements} of $A$ to probability distributions over $B$, we can instead choose to see it as mapping \emph{probability measures} over $A$ to probability measures over $B$.
In the finite case, if we think of finite probability distributions as normalised and nonnegative vectors in $\mathbb{R}^{n}$, then $f$ can be seen as a linear map with the property that it maps points in one simplex to points in another.
(This justifies thinking of it as a stochastic matrix.)

The string diagram notation becomes useful when we start thinking about joint distributions.
We do this by drawing wires in parallel.
As an example, we can consider a Markov kernel defined by a function $h\colon A\times B \to P(C\times D)$.
This function takes two arguments, an element of $A$ and an element of $B$, and it returns a joint probability distribution over $C$ and $D$.
In string diagrams we write this as
\begin{equation}
\pbox{
 \stringdiagram {
 \wire (0,0) -- (4,0);
 \wire (0,2) -- (4,2);
 \stoch{2,1}{$h$}{4};
 \ulabel{0,2}{B};
 \ulabel{0,0}{A};
 \ulabel{4,2}{D};
 \ulabel{4,0}{C};
 }
}{.} 
\end{equation}
In symbols, we write $h\colon A\otimes B \todot C\otimes D$. 
An object like $A\otimes B$, drawn as two parallel wires, can either be thought of as the measurable space $A\times B$ (which is the Cartesian product of sets in the finite case), or as the space of probability measures over $A\times B$.
The symbol $\otimes$ is referred to as a monoidal product.

There is some inherent ambiguity in this notation.
If we draw three parallel wires,
$
\stringdiagram{
\wire (0,0) -- (3,0);
\wire (0,1.6) -- (3,1.6);
\wire (0,3.2) -- (3,3.2);
\llabel{0,0}{A};
\llabel{0,1.6}{B};
\llabel{0,3.2}{C};
}\,,
$
it could either mean $(A\otimes B)\otimes C$ or $A\otimes (B\otimes C)$.
In the finite case, these correspond to the sets $(A\times B)\times C$ and $A\times (B\times C)$. 
These are different sets, since one is composed of pairs $((a,b),c)$ and the other of pairs $(a,(b,c))$.
This ambiguity is not important in practice, however, and the formal machinery of \emph{monoidal categories} allows us to use string diagrams without worrying about it.
We do not give a formal treatment of this here.
(A concise summary can be found in \cite{baez_physics_2011}.)
Instead we simply remark that when we draw three parallel wires we think of joint distributions over $A$, $B$ and $C$, and the precise distinction between $P(A\times (B\times C))$ and $P((A\times B)\times C)$ will not be important to us.

In a similar vein, the spaces $A$ and $A\otimes \one$ are different, but the difference is not important to us, and in fact they are written the same way in string diagrams.
This is because we draw $\one$ as an invisible wire.
This also allows us to write
\begin{equation}
 \pbox{
 \stringdiagram{
 	\wire (0,0) -- (4,0);
	\wire (0,1) -- (4,1);
	\llabel{0,-0.1}{A};
	\llabel{0,1.1}{B};
 }
 \quad=\quad
 \stringdiagram{
 	\wire (0,0) -- (4,0);
	\wire (0,2.5) -- (4,2.5);
	\llabel{0,0}{A};
	\llabel{0,2.5}{B};
 }
 \quad=\quad
 \stringdiagram{
 	\wire (0,0) -- (2,-1) -- (4,0);
	\wire (0,1) -- (1,1) -- (3,2) -- (4,2);
	\llabel{0,-0.1}{A};
	\llabel{0,1.1}{B};
 }
 }{.}
\end{equation}
That is, string diagrams are stretchy in the vertical direction as well as the horizontal one.
We can bend the wires, as long as we don't deform them so much that they point backwards, from right to left.

This also allows to write things like
\begin{equation}
\stringdiagram{
	\wire (0,1) -- (2,1);
	\ulabel{0,1}{$A$};
	
	\wire (2,2) -- (4,2);
	\ulabel{4,2}{$C$};
	
	\wire (2,0) -- (4,0);
	\ulabel{4,0}{$B$};
	
	\stoch{2,1}{$f$}{4};
}
\end{equation}
for a kernel $f\colon A\todot B\otimes C$.

We can also draw morphisms (i.e. Markov kernels) in parallel with each other, for example,
\begin{equation}
\pbox{
 \stringdiagram {
 \wire (0,0) -- (4,0);
 \wire (0,3) -- (4,3);
 \stoch{2,0}{$f$}{3};
 \stoch{2,3}{$g$}{3};
 \ulabel{0,3}{C};
 \ulabel{0,0}{A};
 \ulabel{4,3}{D};
 \ulabel{4,0}{B};
 }
}{.} 
\end{equation}
We write this in symbols as $f\otimes g$, which is a morphism of type $A\otimes C \todot B\otimes D$.
In the finite case, it is given by
\begin{equation}
 (f\otimes g)(b,d\mg a,c) = f(b\mg a) \, g(d\mg c).
\end{equation}
The probabilities $f(b\mg c)$ and $g(d\mg c)$ are multiplied together because the two Markov kernels are operating in parallel.
One can imagine an element of $A$ entering from the bottom left and being stochastically transformed by $f$ into an element of $B$, while in parallel, and independently, an element of $C$ enters from the top left and is stochastically transformed by $g$ into an element of $D$.
In general, in the finite case, $f\otimes g$ is given by the tensor product of the stochastic matrices that represent $f$ and $g$.
(This might give some intuition for the symbol~$\otimes$.)

We can cross wires over each other.
(In category theory terms, the categories we are concerned with are symmetric monoidal categories.)
The diagram
\begin{equation}
\stringdiagram{
 \wire (0,0) -- (1,0) -- (3,2) -- (4,2);
 \wire (0,2) -- (1,2) -- (3,0) -- (4,0);
 \llabel{0,0}{A};
 \llabel{0,2}{B};
 \rlabel{4,0}{B};
 \rlabel{4,2}{A};
}
\end{equation}
can be seen as a Markov kernel $A\otimes B\todot B\otimes A$.
In the finite case it is defined by
\begin{equation}
 \operatorname{swap}_{A,B}(b',a'\mg a,b) = \delta_{a,a'}\delta_{b,b'}.
\end{equation}

We have a number of equations that are standard in monoidal category theory, and allow us to freely slide boxes along wires and bend wires to cross over each other.
These can either be shown directly from the definitions above or (perhaps more usefully) deduced from the definition of a symmetric monoidal category.
Three such equations are as follows.
More details can be found in the references cited above.
\begin{align}
 \stringdiagram{
 \wire (0,0) -- (1,0) -- (3,2) -- (5,2) -- (7,0) -- (8,0);
 \wire (0,2) -- (1,2) -- (3,0) -- (5,0) -- (7,2) -- (8,2);
 \llabel{0,0}{A};
 \llabel{0,2}{B};
 \rlabel{8,0}{A};
 \rlabel{8,2}{B};
}
\quad&=\quad
 \stringdiagram{
 \wire (0,0) -- (8,0);
 \wire (0,2) -- (8,2);
 \llabel{0,0}{A};
 \llabel{0,2}{B};
 \rlabel{8,0}{A};
 \rlabel{8,2}{B};
}
\\[1em]
 \stringdiagram {
 \wire (0,0) -- (8,0);
 \wire (0,2) -- (8,2);
 \stoch{2.25,0}{$f$}{3};
 \stoch{5.75,2}{$g$}{3};
 \llabel{0,2}{C};
 \llabel{0,0}{A};
 \rlabel{8,2}{D};
 \rlabel{8,0}{B};
 }
 \quad&=\quad
 \stringdiagram {
 \wire (0,0) -- (8,0);
 \wire (0,2) -- (8,2);
 \stoch{5.75,0}{$f$}{3};
 \stoch{2.25,2}{$g$}{3};
 \llabel{0,2}{C};
 \llabel{0,0}{A};
 \rlabel{8,2}{D};
 \rlabel{8,0}{B};
 }
 \\[1em]
 \stringdiagram {
 \wire (0,2) -- (5,2) -- (7,0) -- (8,0);
 \wire (0,0) -- (5,0) -- (7,2) -- (8,2); 
 \stoch{2.25,0}{$f$}{3};
 \llabel{0,2}{C};
 \llabel{0,0}{A};
 \rlabel{8,2}{B};
 \rlabel{8,0}{C};
 }
 \quad&=\quad
 \stringdiagram {
 \wire (0,0) -- (1,0) -- (3,2) -- (8,2);
 \wire (0,2) -- (1,2) -- (3,0) -- (8,0);
 \stoch{5.75,2}{$f$}{3};
 \llabel{0,2}{C};
 \llabel{0,0}{A};
 \rlabel{8,2}{B};
 \rlabel{8,0}{C};
 }
\end{align}

So far, everything we have said about string diagrams applies to any symmetric monoidal category.
However, there are two additional things we can add that take us much closer to probability theory.
These are the ability to \emph{copy} and to \emph{delete}.
These operations, and their special properties, do not necessarily exist in other contexts, such as quantum mechanics.
This is a central point of \cite{baez_physics_2011,coecke_categories_2011}.
We will stick to the context of classical probability, however, so copying and deletion will always be possible in this paper.

We cover deletion first.
For every measurable space $A$ there is a unique kernel of type $A\to \one$, which we call $\del_A$.
In the finite case it is given by $\del_A(\star\mg a) = 1$ for all $a\in A$.
We can think of this as a $1\times |A|$ matrix (i.e.\ a row vector) whose entries are all 1.
This is the only possible $1\times |A|$ stochastic matrix.

In string diagrams we write such a deletion kernel as a black dot:
\begin{equation}
\pbox{
\stringdiagram{
 \wire (0,0) -- (2,0);
 \blackdot{2,0};
 \llabel{0,0}{A};
 }
 }{\,.}
\end{equation}
There is one such morphism for every measurable space, but we denote them all with the same kind of black dot.
These black dots have the property that
\begin{equation}
\label{naturality-of-delete}
\pbox{
\stringdiagram{
 \wire (0,0) -- (4.5,0);
 \stoch{2,0}{$f$}{3};
 \ulabel{3.5,0}{B};
 \blackdot{4.5,0};
 \ulabel{0,0}{A};
 }
\,\,=
\stringdiagram{
 \wire (0,0) -- (2,0);
 \blackdot{2,0};
 \ulabel{0,0}{A};
 }
 }{}
\end{equation}
for every Markov kernel $f$. This says that if we take some input $A$, perform some stochastic operation $f$ on it and then delete the result, this is the same as simply deleting the input.\footnote{In category theory terms, this means that the set of all delete kernels collectively forms a natural transformation. (Specifically, it is a natural transformation from the identity functor to the functor that sends all objects to $\one$ and all morphisms to $\id_\one$.) For this reason this property of delete kernels is called ``naturality.''}

The second special operation is copying.
For every measurable space $A$ there is a kernel $\cpy_A\colon A\to A\otimes A$, which we will describe shortly.
We write this also as a black dot, but this time with two output wires rather than one.
\begin{equation}
\pbox{
\stringdiagram{
 \wire (0,0) -- (2,0);
 \wire[1] (2,0) -- (2,1) -- (4,1);
 \wire[1] (2,0) -- (2,-1) -- (4,-1);
 \blackdot{2,0};
 \llabel{0,0}{A};
 \rlabel{4,1}{A};
 \rlabel{4,-1}{A};
 }
 }{\,.}
\end{equation}
Informally, this kernel takes an outcome $a\in A$ and copies it, producing a pair $(a,a)$ of identical values.
It's important to note that it copies \emph{values} rather than \emph{distributions}.
Its output does not consist of two independent and identically distributed elements of $A$ but rather two perfectly correlated elements of $A$ that always have the same value.
In the discrete case the copy map is defined as 
\begin{equation}
 \cpy_A(a'',a'\mg a) = 
\begin{cases}
 1 & \text{if $a''=a'=a$} \\
 0 & \text{otherwise.}
\end{cases}
\end{equation}

In addition to \cref{naturality-of-delete}, the copy and delete maps obey the following properties \cite[definition 2.1]{fritz_synthetic_2020}:
\begin{equation}
\label{coassociativity}
 \stringdiagram{
 \wire (0,0) -- (2,0);
 \wire[1] (2,0) -- (2,1) -- (4,1);
 \wire[1] (2,0) -- (2,-1) -- (6,-1);
 \blackdot{2,0};
 \llabel{0,0}{A};
 \blackdot{4,1};
 \wire[1] (4,1) -- (4,0) -- (6,0);
 \wire[1] (4,1) -- (4,2) -- (6,2);
 \rlabel{6,-1}{A};
 \rlabel{6,0}{A};
 \rlabel{6,2}{A};
 }
 \quad=\,\,\,
\stringdiagram{
 \wire (0,0) -- (2,0);
 \wire[1] (2,0) -- (2,1) -- (6,1);
 \wire[1] (2,0) -- (2,-1) -- (4,-1);
 \blackdot{2,0};
 \llabel{0,0}{A};
 \blackdot{4,-1};
 \wire[1] (4,-1) -- (4,0) -- (6,0);
 \wire[1] (4,-1) -- (4,-2) -- (6,-2);
 \rlabel{6,1}{A};
 \rlabel{6,0}{A};
 \rlabel{6,-2}{A};
 }
\end{equation}

\begin{equation}
\label{cancellation}
\stringdiagram{
 \wire (0,0) -- (2,0);
 \wire[1] (2,0) -- (2,1) -- (3.5,1);
 \wire[1] (2,0) -- (2,-1) -- (4,-1);
 \blackdot{2,0};
 \blackdot{3.5,1};
 \llabel{0,0}{A};
 \rlabel{4,-1}{A};
 } 
 \quad=\,\,\,
\stringdiagram{
 \wire (0,0) -- (4,0);
 \llabel{0,0}{A};
 } 
 \quad=\,\,\,
 \stringdiagram{
 \wire (0,0) -- (2,0);
 \wire[1] (2,0) -- (2,-1) -- (3.5,-1);
 \wire[1] (2,0) -- (2,1) -- (4,1);
 \blackdot{2,0};
 \blackdot{3.5,-1};
 \llabel{0,0}{A};
 \rlabel{4,1}{A};
 } 
\end{equation}

\begin{equation}
 \label{cocommutativity}
 \stringdiagram{
 \wire (0,0) -- (2,0);
 \wire[1] (2,0) -- (2,1) -- (3,1);
 \wire[0.75] (3,1) -- (3.5,1) -- (4.5,-1)--(5,-1);
 \wire[1] (2,0) -- (2,-1) -- (3,-1);
 \wire[0.75] (3,-1) -- (3.5,-1) -- (4.5,1)--(5,1);
 \blackdot{2,0};
 \llabel{0,0}{A};
 \rlabel{5.3,1}{A};
 \rlabel{5.3,-1}{A};
 }
 \quad=\,\,\,
 \stringdiagram{
 \wire (0,0) -- (2,0);
 \wire[1] (2,0) -- (2,1) -- (4,1);
 \wire[1] (2,0) -- (2,-1) -- (4,-1);
 \blackdot{2,0};
 \llabel{0,0}{A};
 \rlabel{4,1}{A};
 \rlabel{4,-1}{A};
 }
\end{equation}

\begin{equation}
\label{delete-compatibility}
 \stringdiagram{
 \wire (0,0) -- (2,0);
 \blackdot{2,0};
 \llabel{0,0}{A\otimes B\quad\quad\,};
 }
 \quad=\,\,\,
 \stringdiagram{
 \wire (0,0) -- (2,0);
 \blackdot{2,0};
 \llabel{0,0}{A};
 \wire (0,2) -- (2,2);
 \blackdot{2,2};
 \llabel{0,2}{B};
 }
\end{equation}
\begin{equation}
\label{copy-compatibility}
  \stringdiagram{
 \wire (0,0) -- (2,0);
 \wire[1.5] (2,0) -- (2,1.5) -- (4,1.5);
 \wire[1.5] (2,0) -- (2,-1.5) -- (4,-1.5);
 \blackdot{2,0};
 \llabel{0,0}{A\otimes B\quad\quad\,};
 \rlabel{4,1.5}{\quad\quad\,A\otimes B};
 \rlabel{4,-1.5}{\quad\quad\,A\otimes B};
 }
 \quad=\,\,\,
\stringdiagram{
 \wire (0,0) -- (2,0);
 \wire[1.5] (2,0) -- (2.5,2) -- (4,2);
 \wire[1.5] (2,0) -- (2.5,-2) -- (4,-2);
 \blackdot{2,0};
 \llabel{0,0}{A};
 \rlabel{4,2}{A};
 \rlabel{4,-2}{A};
 \wire (0,1.5) -- (2,1.5);
 \wire[1.5] (2,1.5) -- (2.5,3.5) -- (4,3.5);
 \wire[1.5] (2,1.5) -- (2.5,-0.5) -- (4,-0.5);
 \blackdot{2,1.5};
 \llabel{0,1.5}{B};
 \rlabel{4,3.5}{B};
 \rlabel{4,-0.5}{B};
 }
\end{equation}

\Cref{coassociativity} says that if we make multiple copies of something it doesn't matter which order we make them in.
\Cref{cancellation} says that if we copy something and then delete one of the copies, that is the same as doing nothing to it.
\Cref{cocommutativity} says that if we copy something and then swap the copies it makes no difference.
(Because the two copies are the same as each other.)

\Cref{delete-compatibility,copy-compatibility} are more technical.
They say that if we have elements of $A$ and $B$ we can delete or copy them as a single element of $A\otimes B$ or separately, as elements of $A$ and $B$, and these should give the same result.

These equations can be derived from the definitions we have given for the finite case.
They may also be derived in various more general measure-theoretic contexts \cite{cho_disintegration_2019,fritz_synthetic_2020}.

However, the approach of \cite{fritz_synthetic_2020} is instead to treat them as \emph{axioms}: any symmetric monoidal category with copy and delete maps that obey \cref{naturality-of-delete,coassociativity,cancellation,cocommutativity,delete-compatibility,copy-compatibility} is called a \emph{Markov category}.
One can do a surprising amount of reasoning about probability theory using these axioms alone, although there are also Markov categories that do not directly resemble the category of measurable spaces and Markov kernels that we have described.
There are various additional axioms that can to be added as well, which then allow more specific results to be proven.
(See \cite{fritz_synthetic_2020} for the details.) 

An important thing to note about the copy operator is that, in general,
\begin{equation}
\pbox{
 \stringdiagram{
 \wire (-2,0) -- (2,0);
 \wire[1.5] (2,0) -- (2,1.5) -- (4,1.5);
 \wire[1.5] (2,0) -- (2,-1.5) -- (4,-1.5);
 \blackdot{2,0};
 \ulabel{1.6,0}{B}
 \llabel{-2,0}{A};
 \stoch{0,0}{$f$}{3};
 \rlabel{4,1.5}{B};
 \rlabel{4,-1.5}{B};
 }
 \quad\ne\,\,\,
 \stringdiagram{
 \wire (0,0) -- (2,0);
 \wire[1.5] (2,0) -- (2,1.5) -- (6,1.5);
 \wire[1.5] (2,0) -- (2,-1.5) -- (6,-1.5);
 \blackdot{2,0};
 \ulabel{1.6,0}{A}
 \llabel{0,0}{A};
 \stoch{4.2,1.5}{$f$}{3};
 \stoch{4.2,-1.5}{$f$}{3};
 \rlabel{6,1.5}{B};
 \rlabel{6,-1.5}{B};
 } 
 }{\,\,.}
\end{equation}
That is, copying the output of a kernel $f$ is not the same as copying its input and then applying two copies of the kernel to it.
Intuitively, this is because $f$ might be stochastic.
If we copy the output we end up with two perfectly correlated copies, whereas if we copy the input then the stochastic variations will be independent.

However, if the kernel is deterministic then copying its input is indeed the same as copying its output.
In fact, in the Markov category framework this is the \emph{definition} of a deterministic Markov kernel: we say a kernel $h\colon A \to B$ is deterministic if
\begin{equation}
\label{determinism}
\pbox{
 \stringdiagram{
 \wire (-2,0) -- (2,0);
 \wire[1.5] (2,0) -- (2,1.5) -- (4,1.5);
 \wire[1.5] (2,0) -- (2,-1.5) -- (4,-1.5);
 \blackdot{2,0};
 \ulabel{1.6,0}{B}
 \llabel{-2,0}{A};
 \sqbox{0,0}{$h$}{3};
 \rlabel{4,1.5}{B};
 \rlabel{4,-1.5}{B};
 }
 \quad=\,\,\,
 \stringdiagram{
 \wire (0,0) -- (2,0);
 \wire[1.5] (2,0) -- (2,1.5) -- (6,1.5);
 \wire[1.5] (2,0) -- (2,-1.5) -- (6,-1.5);
 \blackdot{2,0};
 \ulabel{1.6,0}{A}
 \llabel{0,0}{A};
 \sqbox{4.2,1.5}{$h$}{3};
 \sqbox{4.2,-1.5}{$h$}{3};
 \rlabel{6,1.5}{B};
 \rlabel{6,-1.5}{B};
 } 
 }{\,\,.}
\end{equation}
In this paper we use square boxes for kernels that are known to be deterministic, and boxes with rounded edges for general, possibly-stochastic kernels.

In the main text, we write Markov kernels as functions $f\colon A\to P(B)$, and we write deterministic kernels as functions $f\colon A\to B$.
To be more precise, a deterministic kernel should really also be considered as a function $f\colon A\to P(B)$, such that \cref{determinism} is obeyed.
However, if we assume we are working in a category called $\mathsf{BorelStoch}$ (which is a common assumption in category-theoretic probability) then \cref{determinism} implies that $f$ always returns a delta measure \cite[example 10.5]{fritz_synthetic_2020}, and in this case there is not much harm in treating a deterministic kernel $f$ as a function $f\colon A\to B$.

\subsection{The extension to measure theory}
\label{measure-theory}

Above we described the category $\mathsf{FinStoch}$ and introduced string diagrams mostly in that context.
Here we briefly describe how this generalises to the measure-theoretic case, which is needed in order to think about continuous probability.

In the measure-theoretic case the objects ($X$, $Y$, etc.) are any measurable spaces rather than only finite sets.
Markov kernels are still functions $f\colon X\to P(Y)$, but now $P(Y)$ is the set of all probability measures on the measurable space $Y$.
(That is, $P(Y)$ is the set of all functions from the $\sigma$-algebra associated with $Y$ to $[0,1]$, such that Kolmogorov's axioms are obeyed.)
$P(Y)$ can itself be made into a measurable space in a standard way, and the function $f$ must obey an additional restriction that it be a measurable function.
(This means that the preimage of every element of $P(Y)$ must be a member of the $\sigma$-algebra associated with $X$.)

In this case $f(x)$ is a probability measure rather than a probability distribution, and composition is given by integration rather than summation.
(See \cite[example 4]{fritz_synthetic_2020} for the details.)
This gives rise to a category called $\mathsf{Stoch}$, whose objects are all measurable spaces and whose morphisms are all Markov kernels.
(This category is also known as the Kleisli category of the Giry monad, for reasons we do not discuss here.)

Unfortunately the category $\mathsf{Stoch}$ does not have all of the properties that one might want it to have. (See \cref{conditionals-and-bayes} below.)
Because of this a common approach is to work in a category called $\mathsf{BorelStoch}$ (also discussed in \cite[example 4]{fritz_synthetic_2020}), in which the objects are a subset of measurable spaces called standard Borel spaces, and the morphisms are all Markov kernels between standard Borel spaces.
Standard Borel spaces include many kinds of measurable space that one would be likely to use in practice, and in particular they include both finite sets and $\mathbb{R}^{n}$ with its usual $\sigma$-algebra.

\nvt{In the present paper, the properties of $\mathsf{BorelStoch}$ are used in two ways.
Firstly, in $\mathsf{BorelStoch}$ we can always use conditionals, as explained in the next section.
Secondly,}
%
as a notational convenience we treat deterministic kernels and measurable functions as interchangeable, which makes sense in $\mathsf{BorelStoch}$ but doesn't hold in the more general case of $\mathsf{Stoch}$.

\subsection{Conditionals and Bayes' theorem}
\label{conditionals-and-bayes}

Conditional probabilities and Bayes' theorem play central roles in the theory of inference. Here we briefly discuss how they look in string diagrams.
Given a joint distribution $\!\!\stringdiagram{
\wire (0,0) -- (2,0);
\wire (0,2) -- (2,2);
\stoch{0,1}{$q$}{2.5};
\rlabel{2,0}{A};
\rlabel{2,2}{B};
}
$
we may want to split it up into a product of a marginal and a conditional, which in traditional notation, in the discrete case, would be written $p(a,b) = p(a)\, p(b\,|\, a)$.

The category-theoretic approach, as set out in \cite{cho_disintegration_2019,fritz_synthetic_2020}, is slightly different. We write the following, which is called a \emph{disintegration} of $q$.
\nvt{(The term ``disintegration'' is used because it is the opposite of integration.)}
\begin{equation}
\label{conditional-definition-unparametrised}
  \stringdiagram{
 \stoch{2,0}{$q$}{4};
 \wire(2,1) -- (4.5,1);
 \wire(2,-1) -- (4.5,-1);
 \rlabel{4.5,1}{B};
 \rlabel{4.5,-1}{A};
 }
\,\, =
 \pbox{
  \stringdiagram{
 \stoch{2,0}{$q$}{4};
 \wire(2,1) -- (4,1);
 \wire(2,-1) -- (10,-1);
 \uulabel{4,1}{B};
 \blackdot{4,1};
 \wire[2] (5.5,-1)--(5.5,1)--(10,1);
 \blackdot{5.5,-1};
 \ulabel{5,-1}{A};
 \stoch{8,1}{$c$}{3};
 \rlabel{10,1}{B};
 \rlabel{10,-1}{A};
 }}{.}
\end{equation}
Here, $\!\!\stringdiagram{
\wire (0,0) -- (2,0);
\wire (0,2) -- (1.7,2);
\stoch{0,1}{$q$}{2.5};
\rlabel{2,0}{A};
\blackdot{1.7,2}
}
$
is the marginal of $A$ according to the joint distribution $q$.
In the finite case it can be written $\sum_{b\in B} q(a,b)$.
The kernel  $\!\!\!\stringdiagram{
\wire (0,1)--(4,1);
\stoch{2,1}{$c$}{2}
\llabel{0,1}{A};
\rlabel{4,1}{B};
}$
is called a \emph{conditional} of $p$.
It is defined by \cref{conditional-definition-unparametrised}, which in the finite case can be written 
\begin{equation}
\label{conditional-discrete}
 q(a,b) = \left(\sum_{b'\in B} q(a,b')\right) c(b\mg a).
\end{equation}

This is closely analogous to the identity $p(a,b) = p(a)\, p(b\,|\, a)$.
The difference is that $p(b\,|\,a)$ is defined as $p(a,b)/p(a)$, and is only defined when $p(a)>0$.
On the other hand, in \cref{conditional-discrete}, if $\left(\sum_{b'\in B} q(a,b')\right) = 0$ for some $a\in A$ then $q(a,b)$ must be $0$ for all $b\in B$, and consequently the equation puts no constraint on $c(b\mg a)$ in this case.

This means that instead of being undefined in this case, the conditional $c$ is not \emph{uniquely} defined: there may be many different kernels $c$ that satisfy the equation.

This carries over to the general measure-theoretic case as well.
If we are in the category $\mathsf{BorelStoch}$ then for any joint distribution $\!\!\stringdiagram{
\wire (0,0) -- (2,0);
\wire (0,2) -- (2,2);
\stoch{0,1}{$q$}{2.5};
\rlabel{2,0}{A};
\rlabel{2,2}{B};
}
$ there exists at least one conditional $\!\!\!\stringdiagram{
\wire (0,1)--(4,1);
\stoch{2,1}{$c$}{2}
\llabel{0,1}{A};
\rlabel{4,1}{B};
}$
that satisfies \cref{conditional-definition-unparametrised}, but there might be many.
(In the case of $\mathsf{Stoch}$ conditionals may fail to exist at all, see \cite[example 11.3]{fritz_synthetic_2020}.)

We may also want to disintegrate a joint distribution that is a function of some parameter, e.g.\ $\!\!\stringdiagram{
\wire (-2,1) -- (0,1);
\wire (0,0) -- (2,0);
\wire (0,2) -- (2,2);
\stoch{0,1}{$q$}{2.5};
\rlabel{2,0}{A};
\rlabel{2,2}{B};
\llabel{-2,1}{Z};
}$.
In this case \cref{conditional-definition-unparametrised} becomes
\begin{equation}
\label{conditional-definition}
  \stringdiagram{
 \stoch{2,0}{$q$}{4};
 \wire (-1,0) -- (2,0);
 \wire(2,1) -- (4.5,1);
 \wire(2,-1) -- (4.5,-1);
 \rlabel{4.5,1}{B};
 \rlabel{4.5,-1}{A};
 \llabel{-1,0}{Z};
 }
\,\, =
 \pbox{
  \stringdiagram{
  \wire[2.25] (-1,0) -- (-1,2.25) -- (7.5,2.25);
  \stoch{2,0}{$q$}{4};
 \wire (-2,0) -- (2,0);
 \blackdot{-1,0};
 \wire(2,1) -- (4,1);
 \wire(2,-1) -- (10,-1);
 \rlabel{4.3,1}{B};
 \blackdot{4,1};
 \wire[2] (5.5,-1)--(5.5,1)--(8,1);
 \wire (8,1.6) -- (10,1.6);
 \blackdot{5.5,-1};
 \ulabel{5,-1}{A};
 \stoch{8,1.6}{$c$}{3};
 \rlabel{10,1.6}{B};
 \rlabel{10,-1}{A};
 \llabel{-2,0}{Z};
 }}{.}
\end{equation}
Conceptually this is very similar.
We want the disintegration to hold for every parameter value $z\in Z$, and we define the conditional to be a function of $z$ as well as of $a\in A$.
In the discrete case, \cref{conditional-definition} is analogous to the identity $p(a,b\,|\, z) = p(a\,|\, z)\,p(b\,|\, a,z)$.
%

Bayes' theorem is closely related to conditional probability and can be expressed in a similar way.
Given a prior $\!\!\stringdiagram{
\wire (2,1)--(4,1);
\stoch{2,1}{$q$}{2}
\rlabel{4,1}{A};
}$
and a kernel
$\!\!\!\stringdiagram{
\wire (0,1)--(4,1);
\stoch{2,1}{$f$}{2}
\llabel{0,1}{A};
\rlabel{4,1}{B};
}$,
we can define a \emph{Bayesian inverse} of $f$ with respect to $q$, which is a kernel $\!\!\!\stringdiagram{
\wire (0,1)--(4,1);
\stoch{2,1}{$f^{\dagger}$}{2}
\llabel{0,1}{B};
\rlabel{4,1}{A};
}$
such that
\begin{equation}
\label{bayesian-inverse-unparametrised}
\pbox{
 \stringdiagram{
	\wire(1,0) -- (3,0);
	\stoch{1,0}{$q$}{3};
	\rlabel{3.8,0}{A};
	\blackdot{3,0};
	\wire[1.5] (3,0) -- (3,1.5) -- (7.5,1.5);
	\ulabel{7.5,1.5}{B};
	\stoch{5.75,1.5}{$f$}{3};
	\wire[1.5] (3,0) -- (3,-1.5) -- (7.5,-1.5);
	\ulabel{7.5,-1.5}{A};
}
\,\,=
 \stringdiagram{
	\wire(-1.5,0) -- (3,0);
	\stoch{-1.5,0}{$q$}{3};
	\ulabel{0,0}{A};
	\stoch{1.5,0}{$f$}{3};
	\rlabel{3.8,0}{B};
	\blackdot{3,0};
	\wire[1.5] (3,0) -- (3,1.5) -- (7.5,1.5);
	\stoch{5.75,-1.5}{$f^{\dagger}$}{3};
	\wire[1.5] (3,0) -- (3,-1.5) -- (7.5,-1.5);
	\ulabel{7.5,1.5}{B};
	\ulabel{7.5,-1.5}{A};
}
}{.}
\end{equation}
The Bayesian inverse $f^{\dagger}$ depends on the prior $q$ as well as on the kernel $f$.
\nvt{If we had chosen a different distribution in place of $q$, the Bayesian inverse $f^{\dagger}$ would be different.}
As with conditionals, Bayesian inverses are not necessarily unique, and for a given $f$ and $q$ there may be many kernels $f^{\dagger}$ that satisfy \cref{bayesian-inverse-unparametrised}.
(In fact, Bayesian inverses can be seen as a special case of conditionals.)

We may also consider the case where the prior takes a parameter, e.g. $\!\!\!\stringdiagram{
\wire (0,1)--(4,1);
\stoch{2,1}{$q$}{2}
\llabel{0,1}{Z};
\rlabel{4,1}{A};
}$.
In this case a Bayesian inverse in general also needs to depend on the parameter, which gives us the following more general definition:
\begin{equation}
\label{bayesian-inverse}
\pbox{
 \stringdiagram{
	\wire(-1,0) -- (3,0);
	\stoch{1,0}{$q$}{3};
	\rlabel{3.8,0}{A};
	\blackdot{3,0};
	\wire[1.5] (3,0) -- (3,1.5) -- (7.5,1.5);
	\ulabel{7.5,1.5}{B};
	\stoch{5.75,1.5}{$f$}{3};
	\wire[1.5] (3,0) -- (3,-1.5) -- (7.5,-1.5);
	\ulabel{7.5,-1.5}{A};
	\llabel{-1,0}{Z};
}
\,\,=
 \stringdiagram{
	\wire(-4,0) -- (3,0);
	\blackdot{-3,0};
	\wire[2] (-3,0) -- (-3,-2) -- (5.75,-2);
	\stoch{-1.5,0}{$q$}{3};
	\ulabel{0,0}{A};
	\stoch{1.5,0}{$f$}{3};
	\rlabel{3.8,0}{B};
	\blackdot{3,0};
	\wire[1.5] (3,0) -- (3,1.5) -- (7.5,1.5);
	\stoch{5.75,-1.5}{$f^{\dagger}$}{3};
	\wire[1.5] (3,0) -- (3,-1.5) -- (7.5,-1.5);
	\ulabel{7.5,1.5}{B};
	\ulabel{7.5,-1.5}{A};
	\llabel{-4,0}{Z};
}
}{.}
\end{equation}
The references \cite{cho_disintegration_2019,fritz_synthetic_2020,smithe_bayesian_2020} contain much more detail about Bayes' theorem in this form.

%


\section{More details about Bayesian interpretations}
\subsection{Unpacking Bayesian filtering interpretations}
\label{unpacking-filtering}

In this section we give some more intuition for \cref{bayesian-filtering-interpretation-definition} and then note some consequences of it.
The section deals mostly with the case where $S$, $Y$ and $H$ are discrete sets, meaning that we can reason in terms of probability distributions rather than measure theory.
In this case \cref{bayesian-filtering-interpretation-definition} can be written in a form that makes the relationship to Bayes' theorem more clear.
We define a notion of \emph{subjectively impossible input}, which is a value of $S$ that the reasoner believes with certainty will not occur as its next input.
(This does not imply that the input actually is impossible according to the true dynamics of the environment.)
We show that \cref{bayesian-filtering-interpretation-definition} puts no constraints on the reasoner's posterior after receiving a subjectively impossible input.

We also show that the possible interpretations of a machine only depend on which states can transition to which other states given which inputs, and not on the probabilities of such transitions.
In addition, we show that some machines admit no non-trivial interpretations at all.

In order to unpack \cref{bayesian-filtering-interpretation-definition} a little more, let us consider the case where $S$, $Y$ and $H$ are discrete.
Before starting we note that in the finite case, the definition of $\psi_S$, \cref{filtering-interpretation-S}, can be written as
\begin{equation}
 \psi_S(s\mg y) = \sum_{h\in H}\psi_{S,H'}(s,h\mg y).
\end{equation}
In this case, \cref{bayesian-filtering-consistency} can be written in symbols as
\begin{equation}
 \psi_{S,H'}(h,s\mg y) \gamma(y'\mg y,s) = \psi_S(s\mg y) \gamma(y'\mg y,s) \psi_H(h\mg y'),
\end{equation}
for all $s\in S, h,\in H, y,y'\in Y$.
We can cancel $\gamma(y'\mg y,s)$ from both sides on the assumption that it is positive, yielding
\begin{equation}
\label{filtering_intuitive_version}
\gamma(y'\mg y,s)>0 \,\implies\,   \psi_{S,H'}(h,s\mg y)  = \psi_S(s\mg y) \psi_H(h\mg y').
\end{equation}

The condition $\gamma(y'\mg y,s)>0$ means that $y'\in Y$ is a \emph{possible next state} when the machine starts in state $y\in Y$ and receives the input $s\in S$.
(There may be many possible next states in this situation because the machine may be stochastic.)

Let us then suppose that the machine starts in state $y$, receives an input $s$, and transitions to state $y'$. 
Let $h$ be an arbitrary element of $H$.
The number $\psi_{S,H'}(h,s\mg y)\in [0,1]$ can then be seen as the reasoner's prior probability that the next state is $h$ and the next input is $s$.
In more traditional notation we might write this as $P(H'=h, S=s)$, where we leave the state of the underlying machine implicit.
(Here we do not attempt to formalise this in terms of random variables, but simply treat it as a kind of notational shorthand for $\psi_{S,H'}(h,s\mg y)$.)

We may then regard $\psi_S(s\mg y)$ as the reasoner's prior probability that the next input is $s$, i.e. $P(S=s) = \sum_{h\in H} P(H=h,S=s)$.

However, since $\psi_H(h\mg y')$ is conditioned on $y'$ rather than $y$, we instead regard it as the reasoner's \emph{posterior} probability that $H'=h$.
(We refer to $H'$ rather than $H$ here because after it receives an input its previous ``next'' hidden state becomes its current hidden state.)
$\psi_H(h\mg y')$ therefore corresponds to what we might write as $P(H'=h \mid S=s)$.

With this informal shorthand notation \cref{filtering_intuitive_version} then says
\begin{equation}
\label{traditional-bayes-filtering}
 P(H'=h, S=s) = P(S=s)\, P(H'=h \mid S=s),
\end{equation}
which has the same appearance as a familiar identity from elementary probability theory.
It corresponds to a single step of Bayesian filtering, which we spell out in more detail in \cref{bayesian-filtering-proof}.

This shorthand notation gives some intuition for why \cref{bayesian-filtering-consistency} has the particular form it does, but it leaves the dependence on the state of the underlying machine implicit, and in so doing it obscures an important and subtle point.
In a more traditional context, $P(H'=h \mid S=s)$ is defined by
\begin{equation}
P(H'=h \mid S=s) = P(H'=h, S=s)/P(S=s) 
\end{equation}
and has no value when $P(S=s)=0$.
However, in our case $P(H'=h \mid S=s)$ is a shorthand for $\psi_H(h\mg y')$, which is defined even when $\psi_S(s\mg y) = 0$.

In the case where $P(S=s)>0$, \cref{bayesian-filtering-consistency} in the form of \cref{traditional-bayes-filtering} demands that $P(H'=h \mid S=s)$ is indeed equal to $P(H'=h, S=s)/P(S=s)$.
More precisely, if $\psi_S(s\mg y) > 0$ then we must have $\psi_H(h\mg y') =  \psi_{S,H'}(h,s\mg y)/\psi_S(s\mg y)$.
However, if $\psi_S(s\mg y) = 0$ then \cref{bayesian-filtering-consistency} puts no constraints on $\psi_{S,H'}(h,s\mg y)$ at all, or indeed on $\psi_H(h\mg y)$.

In the case where $S$ is a discrete set (even if $Y$ and $H$ are not discrete), we say that $s\in S$ is a \emph{subjectively impossible input} for a given state $y\in Y$ if $\psi_S(s\mg y) = 0$.
The point is that the reasoner believes, with certainty, that it will not receive the input $s$ as its next input.
The reasoning above says that in this situation, \emph{any} posterior over $H$ is acceptable, because Bayes' rule doesn't specify what the posterior should be.
We find this somewhat analogous to the fact that in logic one can deduce any proposition from a contradiction.
\Cref{bayesian-filtering-interpretation-definition} indeed permits any posterior in the case of a subjectively impossible input.
In fact, it even allows the posterior to be chosen stochastically in this case.

This is in a sense the minimal possible assumption we could make.
However, one could imagine addressing the issue in a different way by changing the framework, thus introducing a subtly different notion of interpretation than the one we have presented here.
One possibility would be to allow \emph{partial interpretations}, where $\psi_H$ becomes a partial function, meaning that not every state of the machine needs to have an interpretation at all.
This would allow the posterior to be undefined in the case of a subjectively impossible input, rather than merely arbitrarily defined.
Another possibility would be to strengthen \cref{bayesian-filtering-consistency} with additional conditions, forcing the posterior to be meaningful even after a subjectively impossible input.
We suspect that such an approach can lead to an interesting way to formalise improper priors, which are also about having meaningful posteriors in the case of `impossible' inputs, but we leave investigation of this to future work.

We note one other important consequence of the above reasoning, in the discrete case.
When we express \cref{bayesian-filtering-consistency} in the form of \cref{filtering_intuitive_version}, we see that it only depends on whether a transition from $y$ to $y'$ is possible given an input $s$, and not on the probability of such a transition.
Thus, for Bayesian filtering interpretations (and hence also for Bayesian inference interpretations), the only property of a machine that matters is which states can be reached from which other states (in a single step) under a given input.
(Strictly speaking this only makes sense in the discrete case, but we expect an analogous statement to this to hold more generally.)

Finally we note another consequence: \cref{filtering_intuitive_version} implies that if $\gamma(y'\mg y,s)>0$ for every $y,y',s$, then 
\begin{equation}
  \psi_{S,H'}(h,s\mg y)  = \psi_S(s\mg y) \psi_H(h\mg y'),
\end{equation}
for all $y,y'\in Y, s\in S, h\in H$.
Note that for any given $y\in Y$ there must exist some $s\in S$ such that $\psi_S(s\mg y)>0$.
It follows that $\psi_H(h\mg y')$ must be independent of $y'$ in this case.
In other words, if a machine is such that $\gamma(y'\mg y,s)>0$ for all $y,y',s$ then it only admits trivial interpretation maps, in which the beliefs are the same for every state.
Therefore the existence of \emph{any} non-trivial Bayesian filtering interpretation implies a fairly strong constraint on a discrete machine's dynamics, namely that some of its transition probabilities are zero.

\subsection{More on Bayesian filtering}
\label{bayesian-filtering-proof}

In this section we show that \cref{bayesian-filtering-interpretation-definition} does indeed correspond to Bayesian filtering, at least in the case of a deterministic machine.
Our proof of this is inspired by \cite[theorem 6.3]{jacobs_channel-based_2018}, which proves an analogous fact about conjugate priors.
The proof we give uses string diagram reasoning, which means that it holds even in the most general measure-theoretic context; we do not need to assume that the sets involved are discrete.

Since we restrict ourselves to only deterministic machines in this section, we will note a couple of things about deterministic machines before we talk about Bayesian filtering.

We first note that the condition for a machine $\gamma$ to be deterministic is
\begin{equation}
\raisebox{-0.5\height}{
	\includegraphics[width=0.7\textwidth]{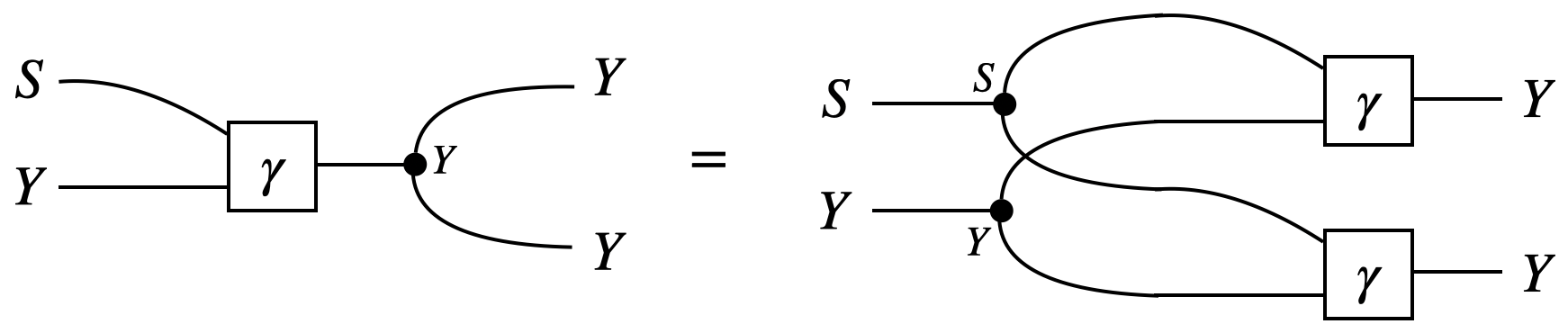}.
}
\end{equation}
This comes from the defining equation for deterministic morphisms, \cref{determinism}, and also the axiom (\ref{copy-compatibility}), noting that $\gamma$ is a kernel with input $S\otimes Y$ and output $Y$.

Next we prove the following proposition, which is useful for reasoning about Bayesian filtering interpretations of deterministic machines.
\begin{proposition}
\label{deterministic-filtering}
 Suppose $\!\!\!\stringdiagram{
\wire (0,1)--(4,1);
\wire[0.5] (0,3)--(0.5,3)--(1,2)--(2,2);
\sqbox{2,1.5}{$\gamma$}{2}
\llabel{0,3}{ S};
\llabel{0,1}{ Y};
\rlabel{4,1}{ Y};
}$ is a deterministic machine, and let $\!\!\!\stringdiagram{
\wire (0,1)--(4,1);
\stoch{2,1}{$\psi_H\!$}{2}
\llabel{0,1}{Y};
\rlabel{4,1}{H};
}$ and $\!\stringdiagram{
\wire (0,-1)--(-4,-1);
\wire[0.5] (0,-3)--(-0.5,-3)--(-1,-2)--(-2,-2);
\stoch{-2,-1.5}{$\kappa$}{2}
\rlabel{0,-3}{ S};
\rlabel{0,-1}{ H};
\llabel{-4,-1}{ H};
}$ be arbitrary Markov kernels.
Then $\psi_H$ and $\kappa$ form a consistent Bayesian filtering interpretation of $\gamma$ (i.e.\ \cref{bayesian-filtering-interpretation-definition} is satisfied) if and only if
\begin{equation}
\label{deterministic-machine-condition}
\raisebox{-0.5\height}{
	\includegraphics[width=0.7\textwidth]{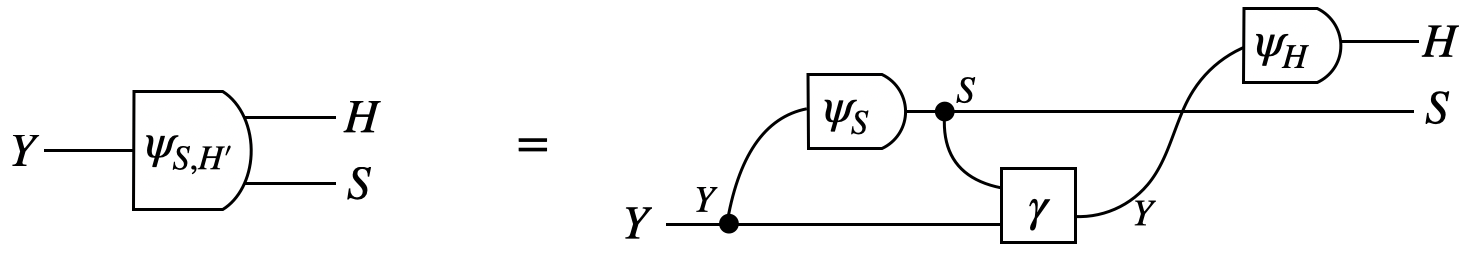},
}
\end{equation}
with $\psi_{S,H'}$ and $\psi_S$ as defined in \cref{filtering-interpretation-SH',filtering-interpretation-S}.
\end{proposition}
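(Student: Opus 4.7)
The plan is to verify both directions of the equivalence by string-diagram manipulation, relying only on naturality of $\del$, the copy--delete axioms, and the determinism characterisation of $\gamma$ displayed just before the proposition.

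For the direction \cref{bayesian-filtering-consistency}~$\Rightarrow$~\cref{deterministic-machine-condition}, my approach is to post-compose both sides of \cref{bayesian-filtering-consistency} with a delete on the rightmost $Y$-output wire. On the left-hand side, \cref{naturality-of-delete} pushes that delete through $\gamma$ onto each of its two inputs, and \cref{cancellation} then collapses both the $Y$-copy and the $S$-copy feeding $\gamma$, leaving only $\psi_{S,H'}$ applied to the original $Y$. On the right-hand side, the same deletion turns the copy on $\gamma$'s output wire into an identity (again by \cref{cancellation}), so $\gamma$'s output flows directly into $\psi_H$; what remains is exactly the right-hand side of \cref{deterministic-machine-condition}. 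Notice that this direction does not use determinism of $\gamma$ at all.

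For the converse, determinism is essential. Starting from the left-hand side of \cref{bayesian-filtering-consistency}, I would substitute \cref{deterministic-machine-condition} to rewrite the occurrence of $\psi_{S,H'}$ as a composite involving $\psi_S$, an internal $\gamma$ and $\psi_H$. The resulting diagram contains two $\gamma$-boxes whose functional inputs are, in both cases, a copy of the original $Y$ and a copy of the sample $S$ produced by $\psi_S$. After a few applications of coassociativity \cref{coassociativity} and cocommutativity \cref{cocommutativity} of copy, together with \cref{copy-compatibility} on the paired wires, the two $\gamma$-boxes can be brought into the canonical ``copy the inputs $(Y, S)$, then apply $\gamma$ in parallel to each copy'' form. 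The determinism characterisation of $\gamma$ then collapses them into a single $\gamma$ whose output is copied; routing one copy to the $Y$-output and the other through $\psi_H$ recovers the right-hand side of \cref{bayesian-filtering-consistency}.

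The main obstacle I foresee is the bookkeeping in this converse step: immediately after substitution the two $\gamma$-boxes sit inside different copy-subtrees on the $Y$- and $S$-wires, and only once they have been rearranged so that their inputs literally come from a single binary copy of $(Y, S)$ can determinism actually be invoked. I would execute this as a sequence of small, local rewrites using the symmetric monoidal and copy axioms, tracking throughout that both $\gamma$-boxes are ultimately fed by the same original $Y$ and the same single sample of $S$ produced by $\psi_S$.
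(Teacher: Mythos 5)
Your proposal is correct and follows essentially the same route as the paper: the forward direction is the paper's marginalisation argument (delete the final $Y$ output, then apply \cref{naturality-of-delete} and \cref{cancellation}), and the converse is the paper's substitute--rearrange--apply-determinism calculation, with the determinism of $\gamma$ used exactly where the paper uses it to merge the two $\gamma$-boxes. Your observation that determinism is needed only for the converse is also consistent with the paper's proof.
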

\begin{proof}
 To see that \cref{bayesian-filtering-interpretation-definition} implies \cref{deterministic-machine-condition} we marginalise \cref{bayesian-filtering-consistency}:
\begin{equation}
\raisebox{-0.5\height}{
	\includegraphics[width=0.8\textwidth]{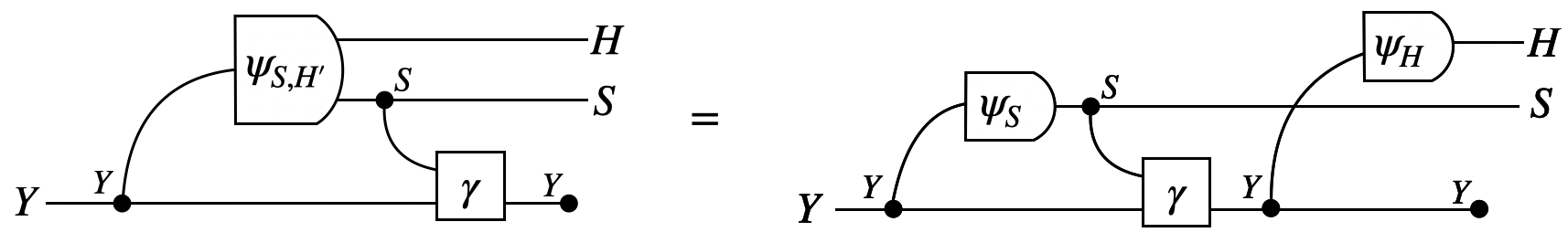}.
}
\end{equation}
This implies \cref{deterministic-machine-condition} by the rules for Markov categories, specifically \cref{naturality-of-delete,cancellation}.

For the other direction we assume \cref{deterministic-machine-condition} holds and calculate
\begin{equation}
\raisebox{-0.5\height}{
	\includegraphics[width=0.5\textwidth]{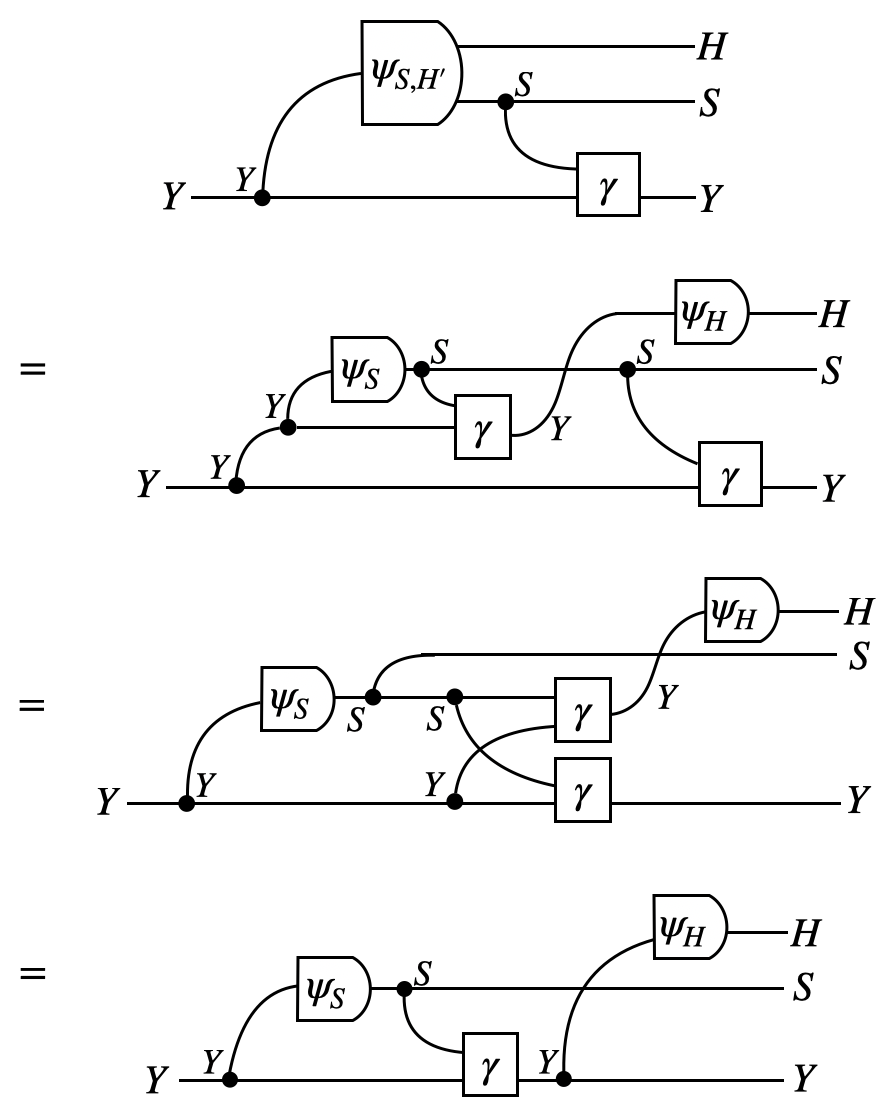}.
}
\end{equation}
The first step substitutes in the right-hand side of \cref{deterministic-machine-condition}, the second rearranges using the rules of Markov categories, and the third uses the determinism condition.
This proves that \cref{bayesian-filtering-consistency} holds.
\end{proof}

We now consider what a Bayesian filtering task involves.
The idea is that the reasoner has a model of a hidden Markov process, given by the kernel $\!\stringdiagram{
\wire (0,-1)--(-4,-1);
\wire[0.5] (0,-3)--(-0.5,-3)--(-1,-2)--(-2,-2);
\stoch{-2,-1.5}{$\kappa$}{2}
\rlabel{0,-3}{ S};
\rlabel{0,-1}{ H};
\llabel{-4,-1}{ H};
}$.
As described in the main text, this kernel can be thought of as a process that simultaneously transforms the hidden state, stochastically, into a new value and emits a visible ``sensor value.''

Given a kernel of this type, we can iterate it to produce sequences of values in $S$.
For example, we can write
\begin{equation}
\pbox{
 \stringdiagram{
\wire (0,-1)--(-4,-1);
\wire[0.5] (0,-3)--(-0.5,-3)--(-1,-2)--(-2,-2);
\stoch{-2,-1.5}{$\kappa^{3}$}{3}
\rlabel{0,-1}{H};
\rlabel{0,-3}{\,\,S^{3}};
\llabel{-4,-1}{ H};
}
\,=\,
 \stringdiagram{
\wire (0,-1)--(-10,-1);
\wire[0.5] (0,-3)--(-0.5,-3)--(-1,-2)--(-2,-2);
\stoch{-2,-1.5}{$\kappa$}{3}
\wire[1] (-0,-4) -- (-2,-4) --(-4,-2)--(-5,-2);
\stoch{-5,-1.5}{$\kappa$}{3}
\wire[1] (0,-5) -- (-4,-5) --(-7,-2)--(-8,-2);
\stoch{-8,-1.5}{$\kappa$}{3}
\rlabel{0,-3}{S};
\rlabel{0,-4}{S};
\rlabel{0,-5}{S};
\rlabel{0,-1}{H};
\llabel{-10,-1}{ H};
\ulabel{-3.5,-1}{ H};
\ulabel{-6.5,-1}{ H};
}
}{,}
\end{equation}
where $S^{3}$ means $S\otimes S\otimes S$ and $\kappa^{n}$ is notation for iterating the kernel $n$ times.
A kernel of this kind, thought of as an infinitely iterated process, is sometimes called a ``coalgebra,'' since it is a special case of a more general concept of that name. (e.g.\ \cite{jacobs_finettis_2020} takes a coalgebraic approach to de Finetti's theorem.)

For filtering we are interested in inferring the final hidden state of a system, given a finite sequence of visible states.
In order to reason about this, we define the following kernel:
\begin{equation}
\raisebox{-0.5\height}{
	\includegraphics[width=0.6\textwidth]{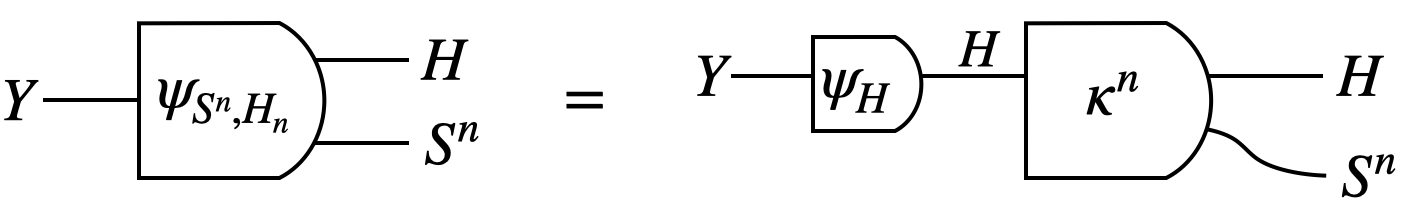}.
} 
\end{equation}
This can be seen as an interpretation map, mapping the state of a reasoner to its beliefs about its next $n$ inputs, $S^{n} = (S_1, \dots, S_n)$, along with the final value of the hidden state, $H_n$.
These take the form of a joint distribution between $S^{n}$ and $H_n$.
This joint distribution is formed from the reasoner's initial prior over the initial hidden state $H_1$ (given by the kernel $\psi_H$) and the model $\kappa$, which is iterated $n$ times.

We define this because in filtering we wish to make a probabilistic inference of the final hidden state, $H_n$, given the sequence of visible states $S^{n}$.
To infer $H_n$ given $S^{n}$ we seek a disintegration of $\psi_{S^{n},H_n}$.
(See \cref{conditional-definition-unparametrised} in \cref{conditionals-and-bayes}.)
Specifically, we seek a kernel $\psi_{H_n\mid S^{n}}\colon S^{n}\otimes Y \to P(H)$ such that
\begin{equation}
\label{filtering-disintegration}
 \raisebox{-0.5\height}{
	\includegraphics[width=0.75\textwidth]{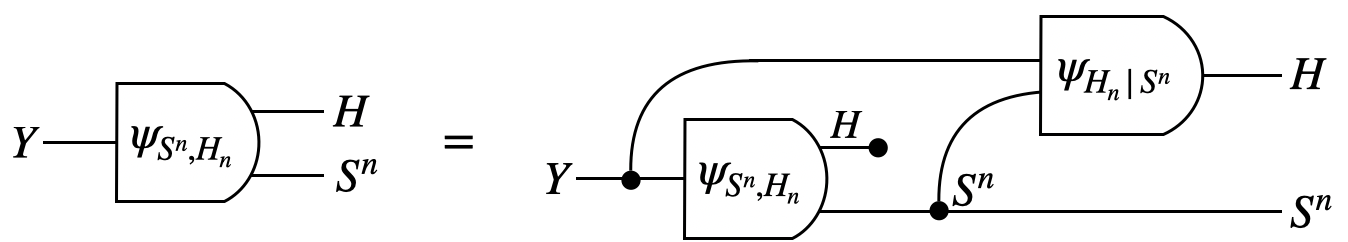}.
} 
\end{equation}
The kernel $\psi_{H_n\mid S^{n}}$ takes in a sequence $S^{n}$ of observations and returns the reasoner's conditional beliefs about $H_n$, given the sequence $S_n$.
It is also a function of the reasoner's initial beliefs $y\in Y$.

In fact such a kernel can be constructed iteratively in a natural way, if we assume that $\psi_H$ and $\kappa$ form a consistent Bayesian filtering interpretation.
To do this, we first define the iteration of $\gamma$, in a similar way to the iteration of $\kappa$:
\begin{equation}
 \raisebox{-0.5\height}{
	\includegraphics[width=0.75\textwidth]{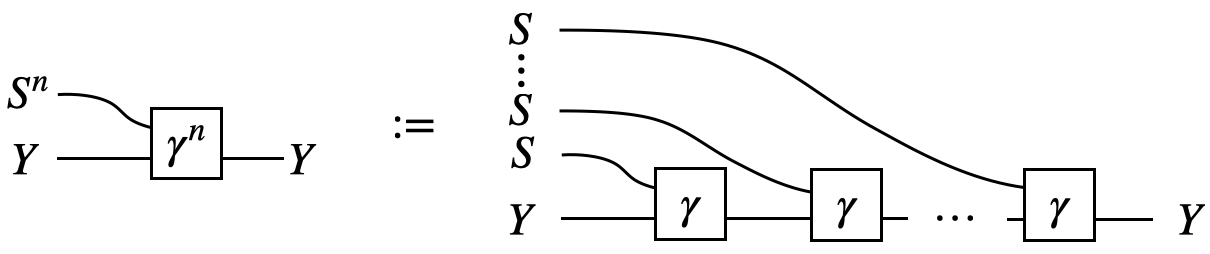},
} 
\end{equation}
where there are $n$ copies of $\gamma$ on the right-hand side. We can then state the following result, which shows that consistent Bayesian filtering interpretations can indeed be seen as performing Bayesian filtering, in the discrete case.

\begin{proposition}
 The kernel $
\stringdiagram{
\wire (-0.3,1)--(9,1);
\wire[0.5] (-0.3,3)--(0.5,3)--(1,2)--(2,2);
\sqbox{2,1.5}{$\gamma^{n}$}{2};
\llabel{-0.3,3}{S^{n}\,};
\llabel{-0.3,1}{ Y\,\,\,};
\ulabel{4,1}{ Y};
\stoch{6.2,1.5}{$\psi_{H}$}{2};
\rlabel{9,1}{H};
}
$ is a conditional of $\psi_{S^{n},H_n}$, satisfying \cref{filtering-disintegration}, in that
\begin{equation}
  \raisebox{-0.5\height}{
	\includegraphics[width=0.85\textwidth]{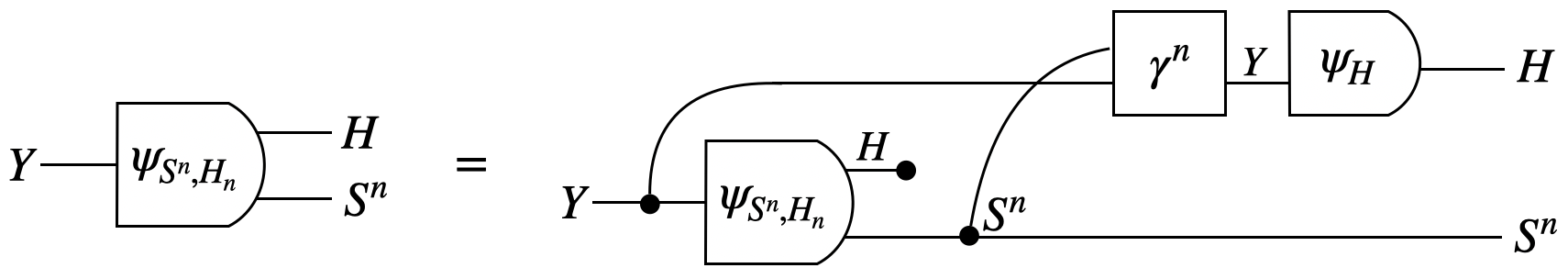}.
} 
\end{equation}
\end{proposition}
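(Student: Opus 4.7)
The approach is induction on $n$, with the base case $n=1$ following from \cref{bayesian-filtering-consistency} and the inductive step combining that same equation with the inductive hypothesis applied to the tail of the iterated process. For the base case, I would marginalise out the $Y$ output on both sides of the consistency equation using naturality of delete (\cref{naturality-of-delete}). On the LHS, $\gamma$ is followed immediately by the deletion of its output and so vanishes by naturality, leaving just $\psi_{S,H'}$. On the RHS the kernel $\gamma$ still feeds into $\psi_H$, so it survives, and what remains is exactly ``copy $y$; apply $\psi_S$ to one copy to obtain $s$; apply $\gamma$ to $(y,s)$ to obtain $y'$; apply $\psi_H$ to $y'$''. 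Since $\psi_{S^1,H_1} = \psi_{S,H'}$ and $\gamma^1 = \gamma$, this is precisely the $n=1$ case of the proposition.

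For the inductive step, I would first factor $\psi_{S^{n+1},H_{n+1}}(y)$ as $\psi_H$, followed by one $\kappa$-step producing $(H_1, S_1)$, followed by $n$ further iterations of $\kappa$ producing $(S_2,\dots,S_{n+1},H_{n+1})$. The first sub-sampler $\psi_H \comp \kappa$ matches the LHS of the consistency equation (with the $Y$ output not yet deleted), so by \cref{bayesian-filtering-consistency} it rewrites as ``copy $y$; get $s_1$ from $\psi_S$; compute $y_1 = \gamma(y,s_1)$; apply $\psi_H$ to $y_1$''. The tail of $n$ further iterations of $\kappa$ now starts from $\psi_H(y_1)$, which is by definition $\psi_{S^n, H_n}(y_1)$. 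Applying the inductive hypothesis at $y_1$ then rewrites the tail as the disintegration with conditional $\psi_H \comp \gamma^n$ on $(y_1, S_2, \dots, S_{n+1})$. Since $\gamma^n(\gamma(y, s_1), s_2, \dots, s_{n+1}) = \gamma^{n+1}(y, s_1, \dots, s_{n+1})$ by the definition of $\gamma^{n+1}$, the composite conditional is $\psi_H \comp \gamma^{n+1}$, as required.

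The main obstacle I anticipate is the careful management of copies and intermediate wires in the string diagrams. Invoking the consistency equation on the first step introduces a copy of $y$ (to feed both $\psi_S$ and $\gamma$), and the inductive hypothesis applied at $y_1$ introduces a further copy of $y_1$; one must then verify, via the copy/delete axioms (\cref{coassociativity,cancellation,cocommutativity,delete-compatibility,copy-compatibility}) and naturality, that all these copies collapse into the single copy of $y$ demanded by the statement. A minor secondary point is that the marginal appearing on the disintegration side of the $(n+1)$-equation is precisely the $S^{n+1}$-marginal of $\psi_{S^{n+1},H_{n+1}}(y)$; this follows either by a parallel induction on the marginal alone or, more directly, by deleting $H_n$ throughout the main induction.
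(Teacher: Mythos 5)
Your overall strategy --- a single induction on $n$, peeling off one $\kappa$-step of $\psi_{S^{n+1},H_{n+1}}$ and rewriting it with the consistency equation --- is close in spirit to the paper's proof, and your base case is correct (it is essentially \cref{deterministic-machine-condition}, obtained from \cref{bayesian-filtering-consistency} by deleting the $Y$ output and applying \cref{naturality-of-delete,cancellation}). But there is a genuine gap in the inductive step. After applying the consistency equation to the first step and the inductive hypothesis at $y_1=\gamma(y,s_1)$, you are left with a diagram in which $\gamma$ is applied \emph{once} to $(y,s_1)$ and its output $y_1$ is then \emph{copied}: one copy feeds the marginal $\psi_{S^{n}}$ of the tail, the other feeds $\gamma^{n}$ in the conditional branch. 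The target diagram instead copies $y$ and $s_1$ \emph{upstream} and applies $\gamma$ to the two copies independently (once inside the sequential form of the marginal $\psi_{S^{n+1}}$, once inside $\gamma^{n+1}$). Exchanging these two configurations is precisely the determinism condition \cref{determinism} for $\gamma$; it is \emph{not} derivable from the copy/delete axioms and naturality that you list, and for a genuinely stochastic $\gamma$ the two diagrams differ (perfectly correlated versus conditionally independent copies of the next machine state). Your proposal never invokes the determinism of $\gamma$, so as written the key step fails.

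This is exactly where the paper's proof does extra work: the whole section is restricted to deterministic machines, and the argument is split into two inductions --- one using the consistency equation rewritten in terms of $\bar\psi_S$ and $\kappa$ to convert the model-based joint $\psi_{S^{n},H_n}$ into the sequential machine-update form, and a second one ``in the other direction'' that repeatedly applies \cref{deterministic-machine-condition} (whose equivalence with \cref{bayesian-filtering-consistency} is the content of \cref{deterministic-filtering} and itself rests on determinism) to push the copies back through $\gamma$ and reassemble $\gamma^{n}$ against the marginal $\psi_{S^{n}}$. Your outline can be repaired by inserting an explicit appeal to \cref{determinism} (for $\gamma$, and hence for $\gamma^{n}$) at the point where the copy of $y_1$ must be commuted past $\gamma$; without that, the ``copies collapse'' claim is the missing idea rather than a routine verification.
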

\begin{proof}
We begin by defining the kernel
\begin{equation}
  \raisebox{-0.5\height}{
	\includegraphics[width=0.6\textwidth]{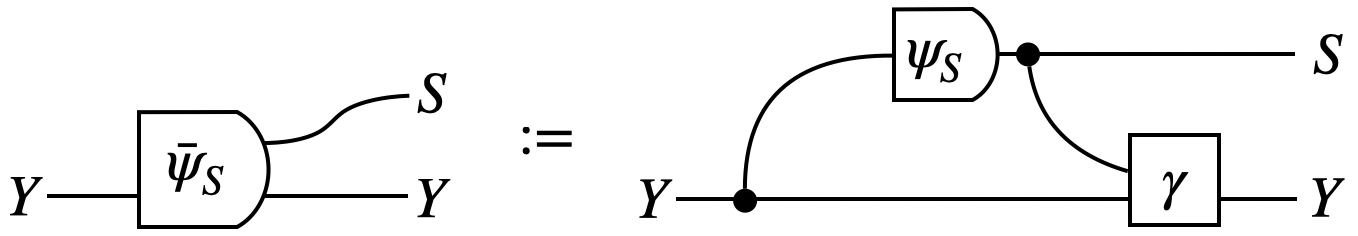}.
} 
\end{equation}
We also define its iteration, $(\bar\psi_S)^{n}\colon Y\to Y\otimes S^{n} $, analogously to $\kappa^{n}$ and $\gamma^{n}$.
We note that the consistency equation for Bayesian filtering interpretations, \cref{bayesian-filtering-consistency}, can be written in terms of $\kappa$ and $\bar\psi_S$, as
\begin{equation}
  \raisebox{-0.5\height}{
	\includegraphics[width=0.8\textwidth]{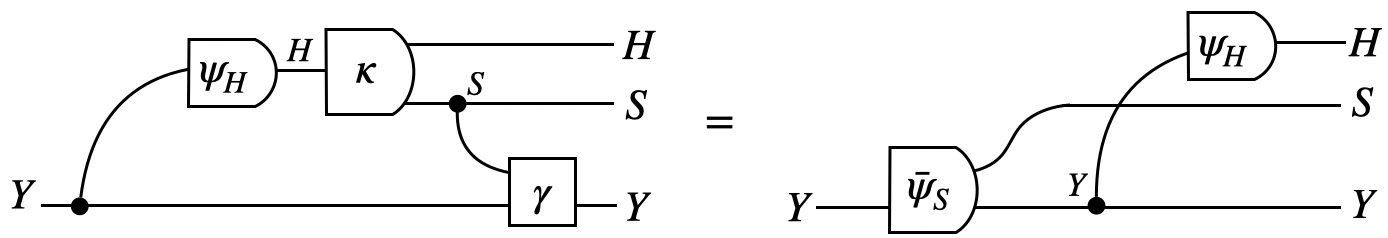}.
} 
\end{equation}
We then calculate
\begin{equation}
  \raisebox{-0.5\height}{
	\includegraphics[width=0.8\textwidth]{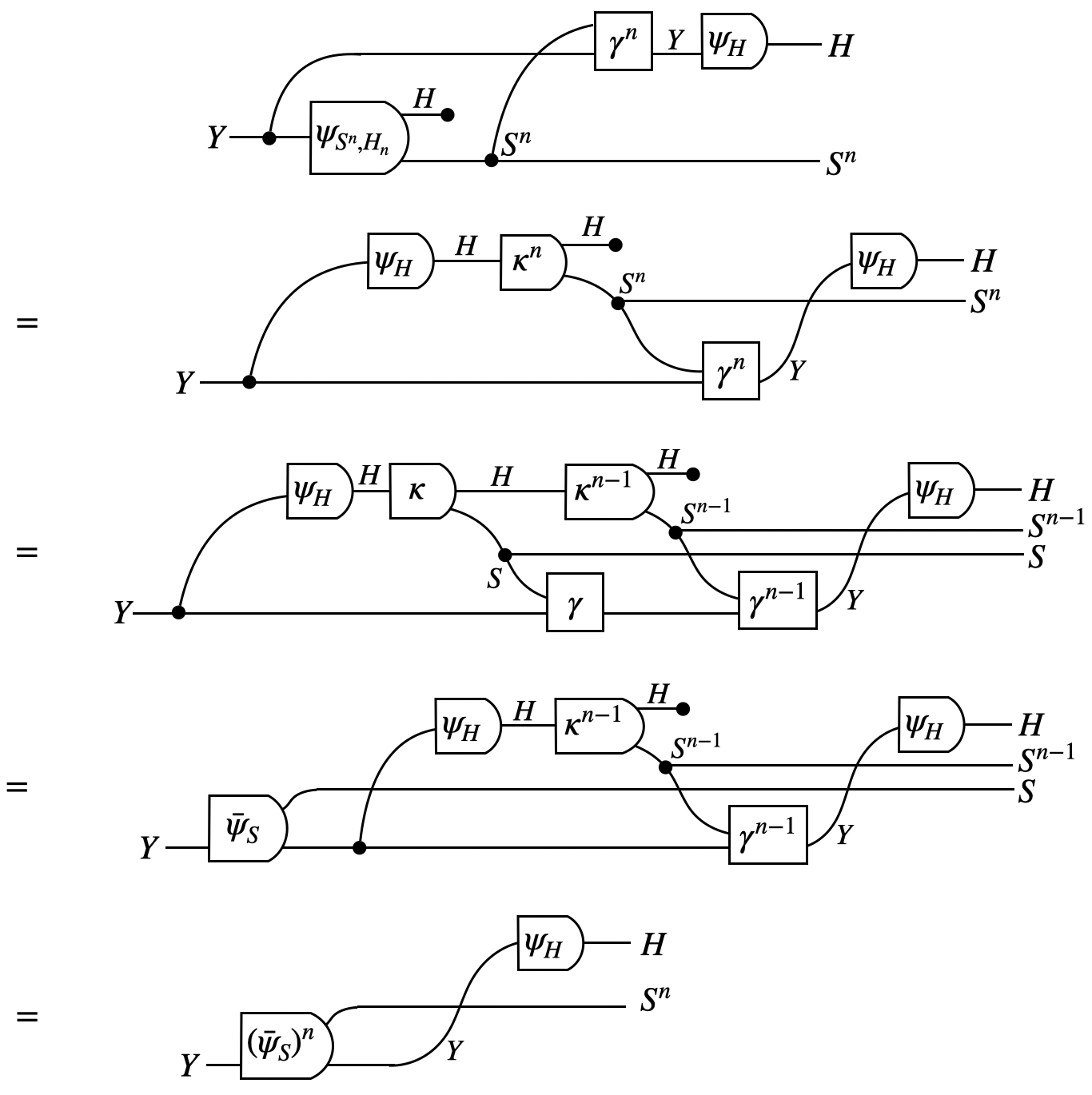},
} 
\end{equation}
where the last step is by applying the other steps inductively.
We can then apply a second inductive argument in ``the other direction'' using \cref{deterministic-machine-condition}, as follows:
\begin{equation}
  \raisebox{-0.5\height}{
	\includegraphics[width=0.7\textwidth]{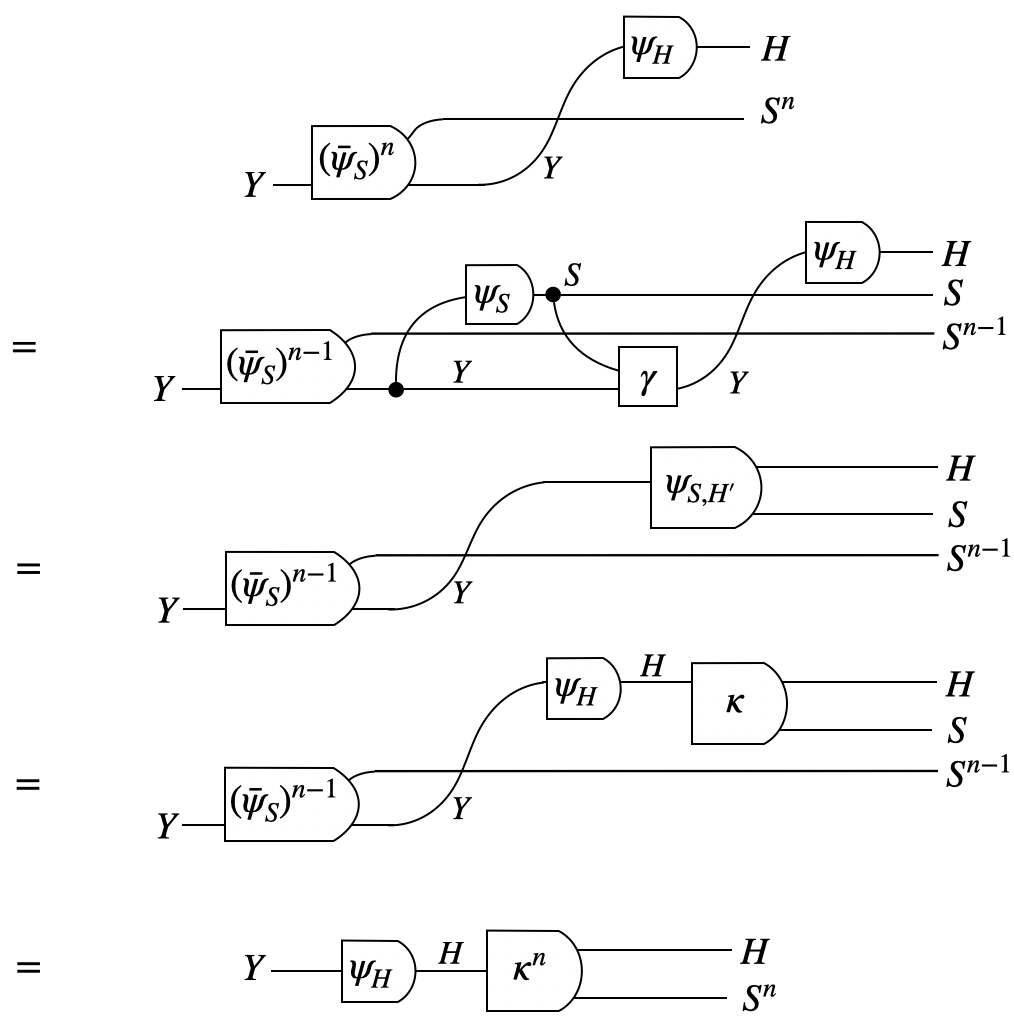},
} 
\end{equation}
where the last step is again by applying the other steps inductively.
%
\end{proof}

We have proved that
$
\stringdiagram{
\wire (-0.3,1)--(9,1);
\wire[0.5] (-0.3,3)--(0.5,3)--(1,2)--(2,2);
\sqbox{2,1.5}{$\gamma^{n}$}{2};
\llabel{-0.3,3}{S^{n}\,};
\llabel{-0.3,1}{ Y\,\,\,};
\ulabel{4,1}{ Y};
\stoch{6.2,1.5}{$\psi_{H}$}{2};
\rlabel{9,1}{H};
}
$ is a conditional of $\psi_{S^{n},H_n}$.
The kernel $
\stringdiagram{
\wire (-0.3,1)--(9,1);
\wire[0.5] (-0.3,3)--(0.5,3)--(1,2)--(2,2);
\sqbox{2,1.5}{$\gamma^{n}$}{2};
\llabel{-0.3,3}{S^{n}\,};
\llabel{-0.3,1}{ Y\,\,\,};
\ulabel{4,1}{ Y};
\stoch{6.2,1.5}{$\psi_{H}$}{2};
\rlabel{9,1}{H};
}
$
can be thought of as giving the reasoner's beliefs about $H$ after receiving a given sequence $S^{n}$ of inputs, starting from a given initial state $y\in Y$.
The result shows that these beliefs are consistent with the agent's prior $\psi_H(y)$ and the model $\kappa$, in the sense that the agent's final posterior beliefs about $H$ are a conditional of its initial joint beliefs about the sequence $S^{n}$ and the final hidden state.
We conclude that a deterministic machine with a consistent Bayesian filtering interpretation can indeed be seen as performing a Bayesian filtering task.
We expect this to be true in the general case of stochastic machines as well.

\subsection{Bayesian inference interpretations and conjugate priors}
\label{conjugate-priors}

In the main text we noted that Bayesian inference corresponds to a special case of Bayesian filtering.
By ``Bayesian inference'' here we mean the case where the reasoner is interpreted as assuming its inputs are i.i.d.\ samples from some known distribution with an unknown parameter space $H$, which we also call the hypothesis space.

The difference between inference and filtering is that we interpret the reasoner as believing that the value of $H$ is unknown but fixed.
That is, the reasoner assumes that $H$ doesn't change over time.
This corresponds to a special case of filtering in which  $\!\stringdiagram{
\wire (0,-1)--(-4,-1);
\wire[0.5] (0,-3)--(-0.5,-3)--(-1,-2)--(-2,-2);
\stoch{-2,-1.5}{$\kappa$}{2}
\rlabel{0,-3}{ S};
\rlabel{0,-1}{ H};
\llabel{-4,-1}{ H};
} = 
\!\stringdiagram{
\wire (0,0) -- (5,0);
\wire[2] (1,0) -- (1,-2) -- (5,-2);
\blackdot{1,0};
\stoch{3,-2}{$\phi$}{2};
\rlabel{5,-2}{ S};
\rlabel{5, 0}{ H};
\llabel{0, 0}{ H};
},
$
for some kernel $\phi$ that we also call the model.

While $\kappa$ can be seen as a model of the environment's dynamics, $\phi$ has more of the character of a statistical model.
It is a model of how the agent's sensor values depend on the unknown value of the hidden parameter $H$.
However, we do not put any constraints on the hypothesis space $H$ or the model $\phi$.
In particular, we do not assume that $\phi$ is an injective function $H\to P(S)$, and we allow the case where $H$ is a finite set.

In the case of inference rather than filtering, the kernels  $\psi_S$ and $\psi_{S,H'}$  from \cref{filtering-interpretation-SH',filtering-interpretation-S} can be written
\begin{equation}
\label{psi-s-inference}
 \pbox{\stringdiagram {
 \wire (0,0) -- (3,0);
 \stoch{1.5,0}{$\psi_{S}$}{3};
 \ulabel{0,0}{Y};
 \ulabel{3,0}{S};
 }
\,= 
\stringdiagram {
 \wire (0,0) -- (5.5,0);
 \stoch{1.5,0}{$\psi_H\!$}{3};
 \ulabel{0,0}{Y};
 \ulabel{2.8,0}{H};
 \stoch{4,0}{$\phi$}{3};
 \ulabel{5.5,0}{S};
 }}{}
\end{equation}
and
\begin{equation}
\label{psi-sh-inference}
 \pbox{
 \stringdiagram{
 \ulabel{0,0}{Y};
 \wire(0,0) -- (2,0);
 \stoch{2,0}{$\psi_{S,H}\!$}{5};
 \wire(2,1.25) -- (4,1.25);
 \wire(2,-1.25) -- (4,-1.25);
 \ulabel{4,1.25}{H};
 \ulabel{4,-1.25}{S};
 }
\,=
 \stringdiagram{
	\wire(0.5,0) -- (3.5,0);
	\stoch{2,0}{$\psi_H\!$}{3};
	\ulabel{0.5,0}{Y};
	\rlabel{3.8,0}{H};
	\blackdot{3.5,0};
	\wire[1.5] (3.5,0) -- (3.5,1.5) -- (7,1.5);
	\ulabel{7,1.5}{H};
	\stoch{5.5,-1.5}{$\phi$}{3};
	\wire[1.5] (3.5,0) -- (3.5,-1.5) -- (7,-1.5);
	\ulabel{7,-1.5}{S};
}}{.}
\end{equation}
%
We write $\psi_{S,H}$ instead of $\psi_{S,H'}$ because in the i.i.d.\ inference case there is only one hidden variable, that is, $H'=H$.
Thus, the joint distribution $\psi_{S,H}(y)$ can be seen as the reasoner's joint belief about its next input and the hidden variable $H$, when its underlying machine is in state $y$.
The consistency equation for Bayesian inference,
\cref{bayesian-inference-consistency}, then follows by substituting these for $\psi_{S,H'}$ and $\psi_S$ in \cref{bayesian-filtering-consistency}, the consistency equation for Bayesian filtering interpretations.

As with Bayesian filtering interpretations, it is useful to consider the case in which the underlying machine is deterministic (but not necessarily discrete).
In \cref{deterministic-filtering} we gave a simpler version of the consistency equation for Bayesian filtering interpretations, which is equivalent to \cref{bayesian-filtering-interpretation-definition} in the case of a deterministic machine.
In the inference case we can substitute \cref{psi-s-inference,psi-sh-inference} into this simplified consistency equation (\cref{deterministic-machine-condition}) to obtain
\begin{equation}
 \label{jacobs-equation}
\stringdiagram{
	\wire(0,0) -- (3.5,0);
	\stoch{2,0}{$\psi_H\!$}{3};
	\ulabel{0,0}{Y};
	\rlabel{3.8,0}{H};
	\blackdot{3.5,0};
	\wire[1.5] (3.5,0) -- (3.5,1.5) -- (7.5,1.5);
	\ulabel{7.5,1.5}{S};
	\stoch{5.75,1.5}{$\phi$}{3};
	\wire[1.5] (3.5,0) -- (3.5,-1.5) -- (7.5,-1.5);
	\ulabel{7.5,-1.5}{H};
}
=
\pbox{
\stringdiagram{
	\wire(-3,0) -- (3.5,0);
	\stoch{-0.3,0}{$\psi_H\!$}{3};
	\ulabel{0.9,0}{H};
	\stoch{2.1,0}{$\phi$}{3};
	\ulabel{-3,0}{Y};
	\blackdot{-2,0};
	\wire[1.5] (-2,0) -- (-2,-1.8) -- (5.75,-1.8);
	\rlabel{3.8,0}{S};
	\blackdot{3.5,0};
	\wire[1.5] (3.5,0) -- (3.5,1.5) -- (10,1.5);
	\ulabel{10,1.5}{S};
	\wire[1.2] (3.5,0) -- (3.5,-1.2) -- (5.75,-1.2);
	\wire(5.75,-1.5) -- (10,-1.5);
	\sqbox{5.75,-1.5}{$\gamma$}{3};
	\ulabel{7,-1.5}{Y};
	\stoch{8.2,-1.5}{$\psi_H\!$}{3};
	\ulabel{10,-1.5}{H};
}}{.}
\end{equation}
This is exactly the equation given by \cite[eq. 16]{jacobs_channel-based_2018} as a definition of a conjugate prior.

Both sides of \cref{jacobs-equation} express a joint distribution between $S$ and $H$, as a function of $Y$.
In the context of conjugate priors, $\phi$ is considered to be a family of distributions, with parameters $H$.
Our interpretation map $\psi_H$ corresponds to another family of distributions, which is a conjugate prior to $\phi$.
The machine state $Y$ corresponds to the so-called hyperparameters, i.e.\ the parameters of $\psi_H$.

This shift in perspective makes sense.
In a computational context, conjugate priors are often useful precisely because they offer a way to perform inference without needing to directly calculate Bayesian inverses at run-time.
Instead, the implementation only needs to keep track of the hyperparameters and update them in response to data.
This update takes place according to a deterministic function, whose form depends on the family $\phi$ and its conjugate prior $\psi_H$.
This updating of the hyperparameters is the role played by $\gamma$: it takes in a data point in $S$ along with the current value of the hyperparameters, and returns the updated hyperparameters.
\Cref{jacobs-equation} asserts that this must done in such a way that the new value of $Y$ does indeed correspond to the correct Bayesian posterior, when mapped to a distribution over $H$ by the kernel $\psi_H$.

We note that it is somewhat nontrivial to find a pair of kernels $\psi_H$, $\phi$ and a function $\gamma$ such that \cref{jacobs-equation} is obeyed.
However, many such examples are known.
(Although it is not an authoritative source, a useful list can be found online \cite[under ``Table of conjugate distributions'']{wikipedia_conjugate_prior}, which explicitly gives both kernels and the update function for each example.) 
Any example of a conjugate prior can be seen as a deterministic machine together with a consistent Bayesian inference interpretation.
In addition, in \cref{app:examples} we give a number of examples of a different flavour, in that in our examples $H$ is either a finite or a countable set.

\subsection{Unpacking Bayesian inference interpretations}
\label{unpacking-inference}

We now unpack \cref{bayesian-inference-interpretation-definition} by converting \cref{bayesian-inference-consistency} into more familiar terms in the case where all the spaces are discrete sets, as we did for filtering interpretations in \cref{unpacking-filtering}.

In the case where $Y$, $H$ and $S$ are finite sets, \cref{bayesian-inference-consistency} can be written as
\begin{equation}
\label{bayesian-consistency-kernels}
 \psi_H(h\mg y)\, \phi(s\mg h)\, \gamma(y'\mg s,m) = \psi_S(s\mg y)\, \gamma(y'\mg s,y)\, \psi_H(h\mg y'), \\
\end{equation}
or equivalently,
\begin{equation}
\label{bayes-in-kernels}
	\gamma(y'\mg s,y)>0\quad\implies\quad \psi_H(h\mg y)\, \phi(s\mg h) = \psi_S(s\mg y)\, \psi_H(h\mg y'),
\end{equation}
since we can cancel $\gamma(y'\mg s,y)$ if we assume it is positive.
For $\gamma(y'\mg s,y)$ to be positive means that it is possible for the machine to transition from state $y\in Y$ to state $y'\in Y$ after receiving the input $s\in S$.

We can now give an intuitive interpretation to the terms in this equation.
If the machine starts in state $y$, receives input $s$, and transitions to state $y'$ as a result, then we can regard $\psi_H(h\mg y)$ as the reasoner's prior beliefs about the hypothesis $h$,  $\psi_S(s\mg y)$ as its prior beliefs about the input $s$, and $\psi_H(h\mg y')$ as the reasoner's posterior belief about the hypothesis $h$. \Cref{bayes-in-kernels} can then be compared, term by term, to the much more familiar equation
\begin{equation}
 p(h)\,p(s\mid h) = p(s)\, p(h\mid s).
\end{equation}
Here we have written $p(s\mid h)$ in place of $\phi(s\mg h)$ and $p(h\mid s)$ in place of $\psi_H(h\mg y')$ in order to emphasise the similarity to Bayes' theorem in a more familiar form.
Our definition, in the form of \cref{bayesian-inference-consistency} or \cref{bayesian-consistency-kernels}, differs from this in that it explicitly takes account of the machine's state, and $\phi$ and $\psi_H$ are defined by Markov kernels rather than conditional probabilities.

We note that, as in the case of filtering (\cref{unpacking-filtering}), our definition of a consistent Bayesian inference interpretation allows the posterior to be arbitrary in the case of subjectively impossible inputs, i.e.\ those $s\in S$ for which $\sum_{h\in H}\psi_H(h\mg y)\phi(s\mg h) = 0$ for a given state $y\in Y$.
Given such an input the reasoner may update its posterior to anything at all.
As with filtering, we regard this as the minimal assumption we could have made, but we can imagine several other choices that one could make instead.
These include allowing the posterior to be undefined in such cases; \emph{requiring} it to be undefined; requiring it to obey some additional consistency equation such that the posterior would make sense even on subjectively impossible inputs; or requiring $\phi$ and $\psi_H$ to be such that subjectively impossible inputs do not exist.
We would consider these to be subtly different kinds of interpretation, and we leave their further investigation to future work.

\section{Details of examples}
\label{app:examples}

\subsection{An Interpretation Of A Non-Deterministic Finite Machine}
\label{app:three_state_machine}
We here present a non-deterministic finite machine with internal state space $Y = \{ y_0, y_1, y_2 \}$ and sensory input space $S = \{ s_1, s_2 \}$. 
\begin{figure}
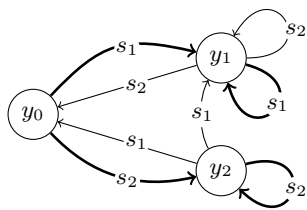

\ctikzfig{3-node-machine} 
\caption{Transitions for a three-state machine. Deterministic transitions are depicted using bold arrows, and non-deterministic transitions using regular arrows. The precise probability values for non-deterministic transitions are not shown, since we only need to know that they are non-zero.}
\end{figure}
One Bayesian interpretation of this machine, for a hidden state space $H = \{ h_1, h_2 \}$, is as follows (where $\delta$ is the Kronecker delta):
\begin{nalign}
	\phi(s_i \mg h_j) &= \delta_{ij} \\
	\psi(h_i \mg y_j) &= \begin{cases}
		\delta_{ij} \text{ if } j \in \{ 1, 2 \} \\
		0.5 \text{ if } j = 0. \\
	\end{cases}
\end{nalign}
Under this interpretation, the model $\phi$ ascribed to the machine is that sensory inputs transparently reflect the hidden state. 
The machine, in internal state $y_0$, is taken to be uncertain about the hidden state; in state $y_i \in \{ y_1, y_2 \}$ it is taken to be certain that the hidden state is $h_i$. 
The dynamics of the machine match this interpretation: it transitions deterministically to $y_i$ when receiving input $s_i$, unless $s_i$ is ``subjectively impossible'' ($s_2$ at $m_1$, and $s_1$ at $m_2$). 
Behaviour on subjectively impossible inputs is not constrained by the consistency equation, so this is a consistent Bayesian interpretation.

\subsection{Machine counting occurrences of different observations}
\label{app:counting_machine}
We now consider a countably infinite deterministic machine $(Y_0,S,\gamma_0)$. 
Let $Y_0=\mathbb{N}^+\times \mathbb{N}^+$ ($\mathbb{N}^+$ excludes $0$) and the input space be $S = \{ +1, -1 \}$. 
The machine deterministically computes the function $f_0: (Y_0 \times S) \to Y_0$, in the sense that $\gamma_0(f_0(y, s) \mg y, s) = 1$. 
Essentially, it keeps distinct count of how many $+1$ and $-1$ inputs it has received. Formally:
\begin{nalign}
f_0((i, j), s) &= \begin{cases}
	(i+1, j) \text{ if } s = +1 \\
	(i, j+1) \text{ if } s = -1
\end{cases}
%
%
\end{nalign}
One consistent Bayesian interpretation $(\psi_0,\phi_0)$ for machine $\gamma_0$ uses hypothesis space $H_0 = [0, 1]$ and model:
\begin{nalign}
\phi_0(s \mg h) = h^{\delta_{-1}(s)}(1-h)^{\delta_{+1}(s)}
\end{nalign}
where
\begin{align}
  \delta_{u}(v):=\begin{cases}
	1 \text{ if } u = v \\
        0 \text{ else.}
\end{cases}
\end{align}
This model is known as the categorical distribution for two outcomes (or just the Bernoulli distribution).
The machine states were deliberately chosen to be the hyperparameters of a possible interpretation map $\psi_0:Y_0 \to H_0$ which is known as the Dirichlet distribution (and in this special case also as Beta-distribution):
\begin{nalign}
\psi_0(h \mg (i,j)) = \frac{1}{B(i,j)} h^{i-1} (1-h)^{j-1}
\end{nalign}
where $B(i,j)$ is the Beta function.

This interpretation map (the Dirichlet distribution) is the conjugate priors for categorical distributions. 
This implies that $(\psi_0,\phi)$ form a consistent Bayesian inference interpretation, as explained in \cref{conjugate-priors}.

%
%

\subsection{Machine tracking differences between the number of occurrences of different observations}
We now consider another countably infinite deterministic machine $(Y_1,S,\gamma_1)$ which has the same input space as the machine in \cref{app:counting_machine}. 
Let $Y_1=\mathbb{Z}$ and the input space again be $S = \{ +1, -1 \}$. 
The machine $\gamma_0$ deterministically computes a function $f_1: (Y_1 \times S) \to Y_1$, in the sense that $\gamma_1(f_1(y, s) \mg y, s) = 1$. 
Machine $\gamma_1$ only counts how many more $+1$ inputs it has received than $-1$ inputs. Formally:
\begin{nalign}
f_1(k, s) &= k+s.
\end{nalign}
One consistent Bayesian interpretation $(\psi_1,\phi_1)$ for machine $\gamma_1$ uses hypothesis space $H_1 = \{ h_{+1}, h_{-1} \}$ and model:
\begin{nalign}
\phi(s \mg h_i) = \begin{cases}
0.75 \text{ if } i = s \\
0.25 \text{ otherwise. }
\end{cases}
\end{nalign}
The interpretation map $\psi_1:Y_1 \to H_1$ is
\begin{nalign}
\psi_1(h_{+1} \mg k) = \frac{1}{2(1 + 0.75^{k} 0.25^{-k})} \\
\psi_1(h_{-1} \mg k) = \frac{1}{2(1 + 0.75^{-k} 0.25^{k})}
\end{nalign}
It is relatively easy to verify that $(\psi_1,\phi_1)$ is a consistent Bayesian interpretation with $\gamma_1$'s dynamics. 

As a teaser for future work we may note the following.
Since machine $\gamma_0$ of \cref{app:counting_machine} stores the individual counts for $s_0$ and $s_1$ inputs, it also implicitly keeps track of the difference between those counts; $\gamma_1$ only keeps track of this difference. 
Consequently, we can define a deterministic kernel $g:Y_0 \to P(Y_1)$ that maps any state $(i, j) \in Y_0$ of $\gamma_0$ to $g(i,j) \coloneqq \delta_{i-j}$ which is a probability measure over the state space of $\gamma_1$. 
It turns out that for this map, for any $k' \in Y_1,s \in S$ and $(i,j) \in Y_0$ we have
\begin{align}
\label{eq:example_machine_map}
  (\gamma_0\comp g)(k' \mg (i,j),s)=\sum_k \gamma_1(k' \mg k,s) g(k \mg(i,j)).
\end{align}
This implies that we can construct an interpretation of machine $\gamma_0$ from the interpretation $(\psi_1,\phi_1)$ of $\gamma_1$. 
For this we precompose the interpretation map $\psi_1$ for $\gamma_1$ with the machine map $g$ to get a consistent Bayesian inference interpretation $(g \comp \psi_1,\phi_1)$ for $\gamma_0$.
In future work we intend to further develop the theory of how a consistent interpretation of one deterministic machine can be ``pulled back'' to other machines that are related in a similar way to \cref{eq:example_machine_map}. 

\section{Details on the relation to the FEP}
\label{app:FEP}

We here try to identify the structures in the FEP that are analogous to the notions of machine $\gamma$, model $\kappa$, and interpretation map $\psi_H$.  
This suggests that, at least in some treatments of FEP, there is an implicit concept that is close to what we have called a reasoner.
We will call this putative concept the FEP reasoner.
%

Large parts of the FEP literature do not explicitly deal with FEP reasoners but are sometimes presented as based on them (e.g.\ in \cite{friston_sophisticated_2021}).
The parts that construct the FEP reasoner are those called ``Bayesian mechanics'' and are still evolving.
A standard reference is \cite{friston_free_2019} but this is known to contain some issues \cite{biehl_technical_2021,friston_interesting_2020,aguilera_how_2021}.
The most recent version can be found in \cite{da_costa_bayesian_2021}.

Understanding precisely the relationship between the concepts of our Bayesian and the FEP reasoner is future work.
The following are preliminary observations.

\subsection{Machine}
We first identify the structure in the FEP setup that is most closely related to a machine and is also said to appear to perform Bayesian inference.
Unfortunately, the latest iteration of the conditions under which there exists an FEP reasoner, which is \cite{da_costa_bayesian_2021}, does not make this particular structure as explicit as the previous version \cite{friston_free_2019}.
We will therefore identify this structure in the older version.
A corresponding structure should also exist in the newer version and we will hint at how it may differ.

The FEP setup in \cite{friston_free_2019} consists of four sets of variables $\eta \in E,s \in S,a \in A,\mu \in M$ called external, sensory, active, and internal states with $E,S,A,M$ finite dimensional real vector spaces. 
These variables obey the stochastic differential equations  
\begin{nalign}
\label{eq:componentwise}
\dot{\eta}&=f_\eta(\eta,s,a) + \omega_\eta\\
    \dot{s}&=f_s(\eta,s,a) + \omega_s\\
      \dot{a}&=f_a(s,a,\mu) + \omega_a\\
      \dot{\mu}&=f_\mu(s,a,\mu) + \omega_\mu
\end{nalign} 
where $\omega_\eta,\omega_s,\omega_a,\omega_\mu$ are independent Gaussian noise terms. 
The FEP goes beyond the scope of a reasoner and formulates a concept of agent.
The concept of an agent should, as part of its interpretation, make it possible to talk about deliberate actions.
In the FEP deliberate actions are associated to the active states $a$.
At the same time, the internal states are only involved in inference (or filtering) and the special case where there are no active states seems to be within the scope of the FEP.
This should still leave us with a FEP reasoner and make it more comparable to our Bayesian reasoner.
We therefore consider the special case where there are no active states such that we get:
\begin{nalign}
\label{eq:componentwise2}
\dot{\eta}&=f_\eta(\eta,s) + \omega_\eta\\
    \dot{s}&=f_s(\eta,s) + \omega_s\\
      \dot{\mu}&=f_\mu(s,\mu) + \omega_\mu.
\end{nalign} 
This looks like a continuous time version of the Bayesian network in \cref{bayesian-network-context} and has the somewhat significant feature that all influences from the external states $\eta$ are mediated by the sensory states $s$. 
This suggests that it is possible to see the sensory states $s \in S$ as inputs to a machine state $\mu \in M$ with the external states $\eta \in E$ ``hidden behind'' the sensory states.   

The internal states $\mu \in M$ are supposed to appear to infer the external states.
So the state space $Y$ of the machine of the FEP reasoner should be identified with $M$. 
Going by their name and their role in the earlier dynamics of \cref{eq:componentwise} it seems reasonable to identify the sensory state space $S$ with the input state space (also $S$ in our notation) of the machine.

This brings us to the machine's kernel $\gamma$. 
Our formalism does not deal with continuous-time kernels at the moment so we only make some informal comments here.
Note that none of the following statements should be considered as proven.
Since all variables together form a (time-homogeneous) Markov process, we can choose times $t,t+\tau$ with $\tau>0$ and write the conditional probability density (assuming things are well behaved enough) at a state $(\eta',s',\mu')$ at $t+\tau$ given a state $(\eta,s,\mu)$ at time $t$ as $p(\eta',s',\mu',t+\tau\mid \eta,s,\mu,t)$ (this notation is taken from \cite[p.31]{risken_fokker-planck_1996}).
We can then marginalise out $\eta'$ and $s'$ to get $p(\mu',t+\tau\mid \eta,s,\mu,t)$ which looks a bit closer to a machine kernel but still depends also on $\eta$. 
We cannot just drop $\eta$ from this expression even if we assume \cref{eq:componentwise2} holds since within a time interval $[t,t+\tau]$ with $\tau >0$ the influence from $\eta$ would propagate through the intermediate values of the sensory states to $\mu'$. 
Instead we here condition on all those intermediate values of the sensory state between $t$ and $t+\tau$.
Write $s[t:t+\tau]$ for a part of the trajectory of the sensory state between $t$ and $t+\tau$ that starts in $s$.
Then, assuming \cref{eq:componentwise2} we should get:
\begin{align}
p(\mu',t+\tau \mid \eta,s[t:t+\tau],\mu,t)=p(\mu',t+\tau\mid s[t:t+\tau],\mu,t).
\end{align}
In order to make this look even more like a kernel we may take the limit as $\tau \to 0$ and so we write 
\begin{align}
  \gamma(\mu'\mid  \mu,s):= \lim_{\tau \to 0} p(\mu',t+\tau\mid s[t:t+\tau],\mu,t)
\end{align}
which is just a notation for an expression that hopefully provides sufficient intuition for our purposes.

%
What is important is that within the system \cref{eq:componentwise2} there should be a (continuous-time) machine describing the dynamics of the internal states in response to sensory states.

In \cite{da_costa_bayesian_2021} the structure of \cref{eq:componentwise} and thus \cref{eq:componentwise2} is no longer assumed.
This means all variables can directly influence each other and the sensory states no longer play a special role.
However, the sensory (and usually the active states) are still special due to an additional assumption.
The larger process has to have a stationary distribution $p(\eta,s,a,\mu)$ that factorises according to 
\begin{align}
\label{eq:mblanket}
  p(\eta,s,a,\mu)=p(\eta|s,a)p(\mu|s,a)p(s,a)
\end{align}
which is referred to as a Markov blanket.
%
%
%
With this assumption \mbcol{only,} one can no longer assume that the sensory states $s[t:t+\tau]$ can ``shield'' those states from direct influence by external states, which makes it more difficult to compare the dynamics to our setup.
%
%
%
A solution may be to use 
a continuous-time version of the approach in \cite{rosas_causal_2020}.
Below we ignore this issue and assume that we have the structure of \cref{eq:componentwise2}. 
\subsection{Model}
For a reasoner we also need a model and an interpretation map.
%
%
%
%
As already mentioned the FEP assumes that the system in \cref{eq:componentwise} has a stationary distribution $p(\eta,s,\mu)$.
%
One purpose of this assumption seems to be the definition of what we call the model.
%
\mbcol{In the language of the FEP literature the stationary distribution defines the generative model. 
Here, generative model refers to a joint probability distribution over causes (parameters/hidden variables) and observed variables.
In \cite[Section 3.b)]{da_costa_bayesian_2021} the generative model is defined to be $p(\eta,s,\mu)$ with $\eta$ as the hidden variables and observed variables $(s,\mu)$.}
We are not sure why $\mu$ is also seen as an observed variable and not only $s$.
This could mean that the machine state $\mu$ itself is also modelled by an FEP reasoner.
However, this would need further investigation that we leave for future work.
So we resort to a previous version where only the marginalised stationary distribution $p(\eta,s)$ was considered as the generative model (\cite[Fig.3]{parr_markov_2020},\cite[p.101]{friston_free_2019}).
In that case the hidden variable space $H$ in our notation should be identified with the external state space $E$ and the model (in our sense) is a conditional distribution induced by the stationary distribution:
\begin{align}
  \phi(s\mid \eta) \coloneqq p(s\mid\eta).
\end{align}
Note that, this choice of a model by itself does not immediately tell us whether the FEP reasoner does filtering or just inference in the sense of \cref{bayesian-inference-interpretation-definition}.
A model like $\phi(s\mg \eta)$ can be part of a filtering kernel $\kappa$ as well. 
In both cases we also need an interpretation map.

\subsection{Interpretation map}
For the interpretation map $\psi_H$ we need a kernel of type $M \to P(E)$.
Indeed, a kernel that has the right type can be identified in the FEP literature.
This kernel is denoted $q_\mu(\eta)$ and we will identify $\psi_H(\eta \mg \mu)=q_\mu(\eta)$.
The kernel's definition, however, relies on another assumption of the FEP, namely the existence of a ``synchronisation map'' $\sigma:M \to E$.
To construct $\sigma$ let us first define two other functions $g_M:S \to M$ and $g_E:S \to E$ via
\begin{nalign}
  f_M(s) &\coloneqq  \mathbb{E}_{p(\mu|s)}[\mu]\\
  f_E(s) &\coloneqq \mathbb{E}_{p(\eta|s)}[\eta]
\end{nalign}
and then set 
\begin{align}
  \sigma(\mu) \coloneqq f_E(f_M^{-1}(\mu))
\end{align}
which is assumed to be well defined.
For details on when this exists in the linear case see \cite{aguilera_how_2021,da_costa_bayesian_2021}.
With this we can define $q_\mu(\eta)$ and in turn the interpretation map $\psi_H$. 
This maps an internal state $\mu$ to the Gaussian distribution with mean value equal to $\sigma(\mu)$:
\begin{align}
\label{eq:FEPinterpretationmap}
  \psi(\eta \mg \mu) \coloneqq q_{\mu}(\eta) \coloneqq \mathcal{N}(\eta; \sigma(\mu),\Sigma(\mu))
\end{align}
where the variance $\Sigma(\mu)$ is defined as the variance of the best Gaussian approximation to the model $p(s|\eta=\sigma(\mu))$ when the external state is equal to $\sigma(\mu)$ \cite[Eq.2.4]{parr_markov_2020}.
Note that in \cite{da_costa_bayesian_2021} the whole stationary distribution is assumed as Gaussian and so $p(\eta|\mu)$ in the corresponding equation in that publication (i.e.\ Eq.3.3) is also a Gaussian.

In conclusion, the necessary ingredients for something like a Bayesian reasoner seem to be present in the FEP literature.
One thing that is special about the FEP reasoner is that its model $\kappa$ and interpretation map $\psi_H$ are derived from features of the process that the machine is embedded in. 

%
%
We do not know whether there is an appropriate notion of consistency equation that the FEP reasoner obeys.
Presumably, instead of the equation for exact inference that we have presented, such an equation would express the idea that the FEP reasoner performs approximate inference in the form of free energy minimisation.
Other differences are that the FEP takes place in continuous time, and perhaps more significantly, that it deals with deliberate actions as well as inference.
However, it is not inconceivable that these could be expressed in the form of a consistency equation.
%
%
%

%
In the current formulations of the FEP, the interpretation is derived from the properties of the `true' environment, such as the stationary distribution, or the synchronisation map $\sigma$.
In our consistency equation approach, this need not be the case, since a reasoner's beliefs only need to be consistent and need not be correct.
This means in particular that no stationarity assumption is needed.

Nonetheless, perhaps an important idea behind the FEP is that the model that most closely corresponds to the true environment can be considered the best one.
%
%
A consistency equation approach would still be helpful, in order to systematically explore whether and how interpretations should relate to the larger process in which the machine is embedded.

%
%
%
%
%
%
%


\end{document}